\documentclass{article}

\usepackage{microtype}
\usepackage{graphicx}
\usepackage{booktabs} 
\usepackage{subcaption}

\usepackage{hyperref}

\usepackage[accepted]{icml2025}

\usepackage{amsmath}
\usepackage{amssymb}
\usepackage{mathtools}
\usepackage{amsthm}
\usepackage{bbm}
\usepackage{dsfont}
\usepackage[capitalize,noabbrev]{cleveref}
\theoremstyle{plain}
\newtheorem{theorem}{Theorem}[section]

\newtheorem{lemma}[theorem]{Lemma}

\theoremstyle{definition}

\theoremstyle{remark}

\usepackage[textsize=tiny]{todonotes}
\usepackage[justification=centering]{caption}

\icmltitlerunning{Doubly Robust Estimation of Causal Effect on CVR with Targeted Regularization}

\begin{document}

\twocolumn[
\icmltitle{Doubly Robust Estimation of Causal Effect on CVR with Targeted Regularization}




\begin{icmlauthorlist}
\icmlauthor{Jiayi Dan}{xxx}
\icmlauthor{Bo Li}{xxx}
\icmlauthor{Lu Deng}{yyy}
\icmlauthor{Yong Wang}{yyy}
\end{icmlauthorlist}

\icmlaffiliation{xxx}{Tsinghua University, Beijing, China}
\icmlaffiliation{yyy}{Tencent Inc., Shenzhen, China}

\icmlcorrespondingauthor{Jiayi Dan, Bo Li}{danjy24@tsinghua.org.cn, libo@sem.tsinghua.edu.cn}

\icmlkeywords{Machine Learning, ICML}

\vskip 0.3in
]



\printAffiliationsAndNotice{} 

\begin{abstract}

Post-click conversion rate (CVR) is a key metric in various scenarios including e-commerce and advertising, reflecting the efficiency and user experience in the second stage of the conversion process. Estimating the causal effect on CVR is therefore of great practical importance. However, directly applying existing causal inference methods to clicked samples introduces sample selection bias and increased variance due to the exclusion of non-click data. Recent studies on CVR prediction introduce “ideal loss”, which optimizes model parameters using an unbiased estimate of the loss over the full sample. Nevertheless, there is no guarantee that unbiasedness of the loss implies unbiasedness of the final estimator. 

We revisit this challenge from the perspective of semiparametric theory. Specifically, we develop a new doubly robust causal effect estimator for chain-structured outcomes such as CVR, and derive its theoretical properties in detail. It achieves a faster convergence rate compared to nuisance parameters estimation and is therefore more robust when using flexible nonparametric estimators, including neural networks. Based on these theoretical findings, we further design a framework based on targeted regularization to improve numerical stability and practical applicability.
 
Extensive experiments on synthetic and real-world data demonstrate the effectiveness and robustness of our method. In addition, we find that naively combining loss debiasing with standard causal estimators underperforms our method, highlighting the necessity of developing the new estimator tailored to this CVR-style objective with solid theoretical guarantees. 

\end{abstract}

\section{Introduction}
Causal effect estimation plays a critical role in enterprise decision-making.
In e-commerce and ride-hailing scenarios, platforms distribute coupons to stimulate user transactions.
Existing research has extensively studied the causal effect of strategies on conversion probability,
yet the stage-wise mechanism behind the conversion process has received relatively less attention.
For example, high-value coupons with strict usage requirements can substantially increase users' click probability while simultaneously reducing post-click conversion probability, which indicate potential deterioration in user experience.
Industry practitioners have recognized this issue and introduced \emph{post-click conversion rate (CVR)}
as a core metric describing second-stage conversion efficiency.
Let $Y_1$ and $Y_2$ denote whether a user clicks and whether conversion occurs (with $Y_2=1$ implying $Y_1=1$);
CVR is defined as $\mathbb{P}(Y_2=1\mid Y_1=1)$.
A healthy policy should improve overall conversion probabilities while ensuring stage-wise efficiency does not degrade.

Under the potential outcomes framework, let $A$ and $X$ denote treatment and covariates, we define the conditional and average potential outcomes of CVR as:
\begin{equation*}
\begin{aligned}
\psi_a(x)
&= \mathbb{P}\!\left(Y_2(a)=1 \mid Y_1(a)=1,\, X=x\right),
\\[6pt]
\psi_a
&= \mathbb{E}\!\left[\mathbb{P}\!\left(Y_2(a)=1 \mid Y_1(a)=1,\, X\right)\right],
\end{aligned}
\end{equation*}
which is defined on the entire population under the strategy.  

From a mechanism decomposition perspective, mediation analysis is another classic framework, aiming to decompose the total effect into direct and indirect effects. A key step is estimating cross-world expectations $\mathbb{E}[Y(a,M(a^*))]$ where $M$ denotes the mediator. We cannot simply treat $Y_1$ as a mediator since $Y_1$ and $Y_2$ are always defined under the same treatment value, and mediation analysis degenerates into standard causal inference without cross-world quantities, since $\mathbb{E}[Y(a,M(a))]\equiv \mathbb{E}[Y(a)]$. To circumvent cross-world estimation, \cite{robins1992identifiability} proposed the \textit{controlled direct effect} (CDE), which is characterized by the estimand $\mathbb{E}[\mathbb{E}(Y(a,m)\mid X)]$.
We cannot adopt related methods for two main reasons: (1) the identifiability assumptions $M \perp\!\!\!\perp Y(a,m)\mid A,X$ rarely hold as the dependency between click and conversion cannot be blocked, (2) the key nuisance parameter $\mathbb{E}(Y_2\mid Y_1=1,X=x,A=a)$ cannot be estimated directly. We cannot regress $Y_2$ on $Y_1$, $X$, and $A$ as in mediation analysis since $Y_2$ is strictly dependent on $Y_1$ ($Y_1 = 0 \rightarrow Y_2 = 0$), and selecting samples with $Y_1 = 1$ would introduce selection bias. 

The preceding discussion leads to the first challenge: selection bias. Most strategies, such as coupon allocation, are implemented prior to any click behavior, and our focus is on the full population. Therefore, directly treating $Y_2$ as the outcome and applying existing causal estimators to samples with $Y_1=1$ would introduce bias due to the distribution shift induced by click selection.
\cite{huang2024entire} demonstrated that estimating treatment effect on CVR only with clicked samples will lead to incorrect conclusions. However, its proposed method  applies only to RCT data and lacks sound theoretical properties. Another line of work introduces the notion of \emph{ideal loss} to correct selection bias 
and develops related CVR prediction models \cite{wang2019doubly,zhang2020large,guo2021enhanced,dai2022generalized,li2022stabledr,li2022tdr}. These methods construct unbiased estimators of the loss over full samples, 
yet no theory directly guarantees that unbiased loss implies unbiased estimation of the target estimand. 

To address these limitations, we propose a new estimator for CVR causal effect with both practical applicability and theoretical guarantees. It directly targets the final estimand instead of relying on loss debiasing. Specifically, we derive the influence function and the von Mises expansion of the target estimand based on semiparametric theory. This inspires us to construct a doubly robust estimator, which remains consistent even if one of the nuisance estimators is inconsistent, and achieves desirable asymptotic properties. Leveraging the above theoretical results, we extend the idea of targeted regularization to develop a more stable estimation framework.

Our contributions can be summarized as follows: 

(1) We revisit the problem from a semiparametric perspective. Specifically, we develop a new doubly robust estimator tailored to chain-structured outcomes, especially CVR, which is a nontrivial constructive problem, and further show its desirable theoretical properties based on the von Mises expansion.


(2) Building on the above findings, we develop a practical estimation framework based on targeted-regularization to 
enhance applicability and empirical performance.

(3) We conduct extensive experiments on synthetic, semi-synthetic, and real-world data, demonstrating the strong performance and robustness of our proposed method.

\section{Related Works}

\subsection{Causal Inference based on Deep Learning}
Deep learning has become a key tool for causal effect estimation due to its flexibility. Many studies focus on mitigating confounding bias through balanced representation learning. \cite{johansson2016learning} and \cite{shalit2017estimating} first established a generalization bound incorporating empirical risk and Integral Probability Metric (IPM) distance. Building on this, several works introduce weighting strategies\cite{johansson2018learning,hassanpour2019counterfactual,assaad2021counterfactual}. \cite{wang2022generalization} and  \cite{kazemi2024adversarially} extend this paradigm to continuous treatment setting.
Meanwhile, another classical neural-based causal estimation paradigm emphasizes propensity-score estimation \cite{shi2019adapting,nie2021vcnet}. In particular, \cite{nie2021vcnet} propose a spline-based variable-coefficient neural network for continuous treatment settings, which prevents the treatment signal from being overshadowed by other high-dimensional covariates. As industry applications continue to grow, some recent studies adopt more flexible network structures\cite{ke2021addressing,zhong2022descn,liu2023explicit}.
Beyond standard feedforward networks, specialized architectures have also been explored, including VAE \cite{louizos2017causal}, GAN \cite{yoon2018ganite} and diffusion models\cite{sanchez2022diffusion}. However, directly applying the above methods to causal effect estimation on CVR can only be conducted on the clicked subset, which may lead to selection bias and data sparsity issues.

\subsection{Doubly Robust Estimator and Targeted Regularization}
To address the estimation bias in the plug-in estimator, the DR learner—first proposed by \cite{van2005statistical} and further developed by \cite{chernozhukov2017double} and \cite{chernozhukov2018double}—achieves doubly robustness by incorporating a bias correction term. Their theoretical work demonstrated that these doubly robust estimators attain $\sqrt{n}$-convergence rates under appropriate conditions. 
Following this framework, \cite{nie2021quasi} introduced the R-Learner, a general class of two-step algorithms for estimating treatment effects. \cite{kennedy2024semiparametric} provides a comprehensive review of doubly robust estimation, with emphasis on its asymptotic properties and minimax-style efficiency bounds. To overcome the finite-sample instability of the standard doubly robust estimator based on one-step correction\cite{kennedy2024semiparametric},\cite{shi2019adapting} introduced targeted regularization inspired by TMLE \cite{van2011targeted}, which corrects estimation bias through an additional perturbation parameter, \cite{nie2021vcnet} further extends this practical framework to continuous-treatment setting. The above studies provide a solid theoretical foundation for establishing our new framework tailored to CVR. However, our target estimand and nuisance parameters differ fundamentally from those in standard causal-inference problems, requiring corresponding extensions and adaptations.

\subsection{Debiased CVR Prediction}
Sample-selection bias is a major challenge in CVR prediction, motivating many researchers to employ techniques from causal inference to address this problem. \cite{wang2019doubly} first propose the “ideal loss” concept, seeking to derive an unbiased estimate of the training loss over the entire sample. Building on this idea, follow-up studies refine the estimation method to improve  accuracy and reduce the variance \cite{zhang2020large,guo2021enhanced,dai2022generalized,li2022stabledr,li2022tdr}. However, this series of studies focus on CVR prediction instead of causal effect estimation. Moreover, their starting point is to estimate the loss over the entire sample, not to target the final estimand, while no theoretical results indicate that an unbiased loss necessarily leads to an unbiased final estimate.
For causal-effect estimation on CVR,  \cite{huang2024entire} demonstrate through real-world data analysis that restricting to clicked samples may yield misleading conclusions,
and propose a multitask model for estimation of CTR and CVR treatment effects. This work further corroborates the importance of our problem, but the proposed method is only applicable to randomized-experiment data. In addition, the approach mainly focuses on network-structure design and can be regarded as a plug-in estimator, which lacks desirable theoretical properties including double robustness and consistency.

\textbf{Research gap:} To the best of our knowledge, few studies have directly targeted the final estimand for CVR causal effect estimation while providing both sound theoretical guarantees and strong practical utility.

\section{Problem Formulation and Notations}

$\textbf{Notation}$ We use $\mathbb{E}$ to denote expectation, $\mathbb{P} \in \mathcal{P}$ to denote true probability measure and we write $\mathbb{P}(f) = \int f(\mathbf{z}) d\mathbb{P}(\mathbf{z})$, where $\mathcal{P}$ is a set of possible probability distributions.
Similarly, we use $\mathbb{P}_n$ to denote the empirical measure and we write $\mathbb{P}_n(f) = \int f(\mathbf{z}) d\mathbb{P}_n(z)$.  We denote convergence in distribution by $\overset{d}{\rightarrow}$ and convergence in probability by $\overset{p}{\rightarrow}$. 
$X_n = O_{\mathbb{P}}(r_n)$ means $X_n / r_n$ is bounded in probability and $X_n = o_{\mathbb{P}}(r_n)$ means $X_n / r_n \overset{p}{\rightarrow} 0$.$\tau$ denotes Rademacher random variables. We denote Rademacher complexity of a function class $\mathcal{F}: \mathcal{X} \to \mathbb{R}$ as $\text{Rad}_n(\mathcal{F}) = \mathbb{E}(\sup_{f\in\mathcal{F}}|\frac{1}{n}\sum_{i=1}^n \tau_i f(X_i)|)$. Given two functions $f_1, f_2: \mathcal{X} \to \mathbb{R}$, we define $\|f_1-f_2\|_{\infty} = \sup_{x\in\mathcal{X}}|f_1(x)-f_2(x)|$ and $\|f_1-f_2\|_{L^2} = (\int_{x\in\mathcal{X}}(f_1(x)-f_2(x))^2 dx)^{1/2}$. For a function class $\mathcal{F}$, we define $\|\mathcal{F}\|_{\infty} = \sup_{f\in\mathcal{F}}\|f\|_{\infty}$. 
$a_n \asymp b_n$ denotes that both $a_n/b_n$ and $b_n/a_n$ are bounded. $\delta(\cdot)$ denotes the Dirac measure, which is used only as a notational device in theoretical analysis.

Let $(\mathbf{z}_1,\ldots,\mathbf{z}_n)$ be i.i.d. samples drawn from $\mathbb{P}$, and denote $\mathbf{z}_i = (\mathbf{x}_i, a_i, y_{1i}, y_{2i})$,
where $\mathbf{x}_i \in \mathcal{X} \subset \mathbb{R}^d$ is the covariate vector, and $a_i \in \mathcal{A} \subset \mathbb{R}$ is the treatment.
Here $y_{1i} \in \{0,1\}$ indicates whether a click occurs, and $y_{2i}$ indicates whether a conversion occurs
, where $y_{2i}=1$ implies $y_{1i}=1$.
We use uppercase $\mathbf{Z},X,A,Y_1,Y_2$ to denote random variables corresponding to the observed lower-case ones. To ensure general applicability, we denote the treatment variable $A$ as continuous, which naturally extends to the discrete treatment case \cite{nie2021vcnet}.

Our target estimand is defined as:
$
\psi_a(\mathbb{P}) = \mathbb{E}\big[\mathbb{P}(Y_2(a)=1 \mid Y_1(a)=1,\mathbf{X})\big].
$,
which is a functional regarding the data-generating distribution $\mathbb{P}$. For standard causal inference problems, the estimand is typically $\psi_a=\mathbb{E}[\mathbb{E}(Y(a)\mid\mathbf{X})]$.
In our setting, $A$ is not randomly assigned, and both $Y_1$ and $Y_2$ depend on $A$. A naive strategy is to restrict estimation to samples with $Y_1=1$ and estimate $\mathbb{E}[\mathbb{P}(Y_2(a)=1\mid\mathbf{X})]$ with existing methods. 
However, this introduces bias since $Y_1$ is non-random. 

We define $\mu_1(\mathbf{x},a)=\mathbb{E}(Y_1\mid\mathbf{X}=\mathbf{x},A=a$,  $\mu_2(\mathbf{x},a)=\mathbb{E}(Y_2\mid\mathbf{X}=\mathbf{x},A=a)$, 
and let $\pi(a\mid \mathbf{x})$ denote the treatment density given $\mathbf{X}=\mathbf{x}$ (propensity score).
Our estimation procedure will involve three nuisance parameters $(\mu_1,\mu_2,\pi)$.
For notational simplicity in later derivations, we write $\mu_1,\mu_2,\pi$ and $\hat{\mu}_1,\hat{\mu}_2,\hat{\pi}$ to refer to $\mu_1(\mathbf{x},a),\mu_2(\mathbf{x},a),\pi(a\mid\mathbf{x})$ and $\hat{\mu}_1(\mathbf{x},a),\hat{\mu}_2(\mathbf{x},a),\hat{\pi}(a\mid\mathbf{x})$.

Our analysis is based on the following assumptions:
\begin{enumerate}
    \item[(i)] \textbf{Overlap.} There exists a constant $c>0$ such that for all $\mathbf{x} \in \mathcal{X}$ and $a \in \mathcal{A}$, $\pi(a\mid \mathbf{x}) \ge c$;
    \item[(ii)] \textbf{Unconfoundedness.} Covariates $\mathbf{X}$ block all back-door paths between treatment $A$ and outcomes $Y_1,Y_2$;
    \item[(iii)] There exists a constant $c>0$ such that 
    $\mu_1(\mathbf{x},a)\ge c$.
\end{enumerate}

Assumptions (i) and (ii) are standard in causal inference.
Assumption (iii) is reasonable in practice since the population is finite and click probability usually has a positive lower bound, samples with $\mu_1=0$ have undefined CVR and are outside our scope.

Under the assumptions above,

\begin{align*}
\psi_a(\mathbb{P})
&= \mathbb{E}\!\left[\frac{\mathbb{P}(Y_2(a)=1,Y_1(a)=1\mid\mathbf{X}=)}{\mathbb{P}(Y_1(a)=1\mid\mathbf{X})}\right] \\
&= \mathbb{E}\!\left[\frac{\mathbb{E}(Y_2(a)\mid\mathbf{X})}{\mathbb{E}(Y_1(a)\mid\mathbf{X})}\right]
= \mathbb{E}\!\left[\frac{\mathbb{E}(Y_2\mid\mathbf{X},A=a)}{\mathbb{E}(Y_1\mid\mathbf{X},A=a)}\right].
\end{align*}

We fully exploit the dependence between $Y_1$ and $Y_2$, which allows all nuisance parameters to be estimated using the full sample.

\section{Doubly Robust Estimation}
\subsection{Plug-in Estimator and Nuisance Parameter Estimation}
We begin with a naive plug-in estimator to illustrate the estimation of nuisance components and the limitations of this approach. 
To leverage the full dataset, the simplest strategy is to predict $\mu_1(\mathbf{x},a)$ and $\mu_2(\mathbf{x},a)$ using covariates and the treatment. The final estimate is obtained by taking their ratio:
\[
\hat{\psi}^{\text{plug-in}} = \frac{1}{n} \sum_{i=1}^{n} \frac{\hat{\mu}_2(\mathbf{x}_i,a)}{\hat{\mu}_1(\mathbf{x}_i,a)}.
\]

This is similar to the approach for CVR prediction used in ESMM \cite{ma2018entire}. 
For causal estimation, following \cite{shi2019adapting}, we extract representations related to the treatment $A$ by predicting $\pi(a\mid\mathbf{x})$ before estimating $\mu_2(\mathbf{x},a)$ and $\mu_1(\mathbf{x},a)$, enabling more accurate nuisance estimation with reduced noise. 
In practice, we add a small constant $\epsilon$ (e.g.~$10^{-9}$) to $\hat{\mu}_1(\mathbf{x},a)$ to ensure a strictly positive lower bound.

Under continuous treatment settings, a good estimator should not be dominated by other features, and maintain continuity and smoothness. 
To achieve this, we adopt the varying coefficient model \cite{hastie1993varying,fan1999statistical,chiang2001smoothing} to estimate $\mu_2(\mathbf{x},a)$ and $\mu_1(\mathbf{x},a)$, where the treatment variable $a$ directly determines neural network parameters. 
Specifically, spline basis functions are used to parameterize weights:
\[
w(a) = \sum_{\ell=1}^{L} \alpha_\ell \phi_\ell(a),
\]
where $\alpha_\ell$ are coefficients and $\phi_\ell(\cdot)$ are polynomial basis functions. 
Therefore, $\mu_2(\mathbf{x},a)$ and $\mu_1(\mathbf{x},a)$ are continuous as long as the network activations are continuous.

For the generalized propensity score $\pi(a\mid\mathbf{x})$, the key difficulty is ensuring that the estimator forms a valid density for continuous treatment. 
Following \cite{nie2021vcnet}, we uniformly discretize the interval $[0,1]$ into $B$ grids and estimate $\pi(\cdot|\mathbf{x})$ on these $(B+1)$ grid points as 
$\pi_{grid}(\mathbf{x})=softmax(\mathbf{w}\mathbf{x})\in \mathbb{R}^{B+1}$,
and then the conditional density could be given via linear interpolation,
i.e.,
$\pi(a|\mathbf{x})=\pi_{grid}^{a_1}(\mathbf{x})+B(\pi_{grid}^{a_2}(\mathbf{x})-\pi_{grid}^{a_1}(\mathbf{x}))(a-a_1)$,
where $a_1=\lfloor Ba\rfloor, a_2=\lceil Ba \rceil$.

We estimate the nuisance parameters by minimizing the negative log-likelihood, and the overall training loss is
\[
\mathcal{L}(\hat{\mu}_1, \hat{\mu}_2,\hat{\pi})
= \frac{1}{n} \sum_{i=1}^{n}
\Big[
\ell(y_{1i}, \hat{\mu}_1) 
+ \ell(y_{2i}, \hat{\mu}_2)
- \log \hat{\pi}
\Big].
\]

However, the validity of this naive strategy heavily relies on the function class of the prediction models. 
When using nonparametric methods such as neural networks instead of maximum likelihood, the nuisance estimators often fail to achieve $\sqrt{n}$ convergence and may introduce non-negligible estimation bias.
To better understand the estimation error and motivate the method developed later, consider the Taylor expansion of
\(
\left|\frac{\hat{\mu}_2}{\hat{\mu}_1} - \frac{\mu_2}{\mu_1}\right|
\):

\begin{align*}
\frac{\hat{\mu}_2}{\hat{\mu}_1} - \frac{\mu_2}{\mu_1}
=
&-\frac{\hat{\mu}_2}{\hat{\mu}_1^{2}}(\hat{\mu}_1 - \mu_1)
+ \frac{1}{\hat{\mu}_1}(\hat{\mu}_2 - \mu_2) \\
&+ O\!\left(
(\hat{\mu}_1 - \mu_1)^2
+ \lvert \hat{\mu}_1 - \mu_1 \rvert \, \lvert \hat{\mu}_2 - \mu_2 \rvert
\right).
\end{align*}

The ``first-order bias'' terms $\hat{\mu}_1-\mu_1$ and $\hat{\mu}_2-\mu_2$ imply that the plug-in estimator is not consistent unless all nuisance parameters are consistently estimated. 
The convergence rate is dominated by the slowest‐converging nuisance parameter, which makes the estimator sensitive to model misspecification, especially under flexible nonparametric estimation that trades increased bias for reduced variance.
Eliminating first-order bias is precisely the motivation behind doubly robust estimation \cite{kennedy2024semiparametric},which we will discuss next.

\subsection{The New Doubly Robust Estimator}

A good estimator can achieve lower bias and is robust to model misspecification, i.e., DR property \cite{bang2005doubly,kennedy2024semiparametric,chen2024doubly}. In this section, we propose a new DR estimator tailored to our target estimand,
which remains consistent even if not all nuisance parameters are consistently estimated, and achieves a faster convergence than the nuisance estimators. This is a nontrivial constructive problem, and it forms the theoretical basis that supports our later model design. 
Specifically, we construct the estimator as follows:

\begin{align*}
\hat{\psi}_a^{\mathrm{dr}} 
&= \mathbb{P}_n \!\left[ \frac{\hat{\mu}_2}{\hat{\mu}_1} \right] 
   + \mathbb{P}_n \!\left[ \frac{\delta(A = a)}{\hat{\pi} \, \hat{\mu}_1^2} 
   \big( (Y_2 - \hat{\mu}_2)\hat{\mu}_1 - (Y_1 - \hat{\mu}_1)\hat{\mu}_2 \big) \right]\\
&:=  \psi_a(\hat{\mathbb{P}}) + \mathbb{P}_n ( \phi_a( \hat{\mathbb{P}})).
\end{align*}
where
$
\phi_a(\mathbb{P})=
\frac{
    \delta(A = a)
}{
    \pi \, \mu_1^2
}
\Big[
    \mu_1\big(Y_2 - \mu_2\big)
    + \mu_2\big(Y_1 - \mu_1\big)
\Big]+
\frac{\mu_2}{\mu_1}
- \psi_a(\mathbb{P})
$ is the influence function of the target estimand, which can be viewed as the “derivative over the distribution space”\cite{kennedy2024semiparametric}.
$\hat{\mathbb{P}}$ denotes the fitted data-generating distribution.

To gain intuition about the construction and properties of this estimator, we begin with the following lemma, which can be regarded as the
"distributional Taylor expansion" of the target estimand.
\begin{lemma} \label{lem:vonmises}
$\psi$ confirms the von Mises expansion \cite{kennedy2024semiparametric}:
\[
\psi_a(\hat{\mathbb{P}}) - \psi_a(\mathbb{P})
= -\int \phi_a(\hat{\mathbb{P}})\, d(\mathbb{P}) + R_2(\hat{\mathbb{P}}, \mathbb{P}),
\]
where
\[
\phi_a(\mathbb{P}) = \frac{\delta(A = a)}{\pi \mu_1^2} \big[(Y_2 - \mu_2)\mu_1 - (Y_1 - \mu_1)\mu_2 \big] + \frac{\mu_2}{\mu_1} - \psi_a(\mathbb{P}),
\]

\begin{align*}
R_2(\hat{\mathbb P}, \mathbb P)
&=
\int \frac{1}{\hat\pi \hat\mu_1}(\pi-\hat\pi)(\mu_2-\hat\mu_2)\,d\mathbb P(x) \\
&-
\int \frac{\hat\mu_2}{\hat\pi \hat\mu_1^2}(\pi-\hat\pi)(\mu_1-\hat\mu_1)\,d\mathbb P(x) +
\int R_\mu\,d\mathbb P(x),
\end{align*}

and
\[
|R_\mu|
\le
C\Big(
(\hat\mu_1-\mu_1)^2
+
|\hat\mu_1-\mu_1|\,|\hat\mu_2-\mu_2|
\Big).
\]
\end{lemma}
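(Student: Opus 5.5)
Take $R_2(\hat{\mathbb P},\mathbb P):=\psi_a(\hat{\mathbb P})-\psi_a(\mathbb P)+\int\phi_a(\hat{\mathbb P})\,d\mathbb P$ as a definition, and show that it equals the stated expression. The computation proceeds in three steps.

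\emph{Step 1: integrate out $(A,Y_1,Y_2)$ in the correction term.} First condition on $\mathbf X$ and integrate over $A$, using the Dirac device: $\int \delta(A=a) g(A,\mathbf X)\,d\mathbb P(A\mid\mathbf X)=\pi(a\mid\mathbf X)\,g(a,\mathbf X)$. Then average over $(Y_1,Y_2)$ given $(\mathbf X,A=a)$, which replaces $Y_j$ by $\mu_j$. With all functions evaluated at $(\mathbf x,a)$, this gives
\[
\int\phi_a(\hat{\mathbb P})\,d\mathbb P=\int\Big\{\frac{\pi}{\hat\pi\hat\mu_1^2}\big[(\mu_2-\hat\mu_2)\hat\mu_1-(\mu_1-\hat\mu_1)\hat\mu_2\big]+\frac{\hat\mu_2}{\hat\mu_1}\Big\}d\mathbb P(\mathbf x)-\psi_a(\hat{\mathbb P}).
\]
The $-\psi_a(\hat{\mathbb P})$ cancels the $\psi_a(\hat{\mathbb P})$ in the definition of $R_2$. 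Also $\psi_a(\mathbb P)=\int \mu_2/\mu_1\,d\mathbb P(\mathbf x)$ by the identification displayed at the end of Section 3. Hence
\[
R_2=\int\Big\{\frac{\hat\mu_2}{\hat\mu_1}-\frac{\mu_2}{\mu_1}+\frac{\pi}{\hat\pi}\Big[\frac{\mu_2-\hat\mu_2}{\hat\mu_1}-\frac{\hat\mu_2(\mu_1-\hat\mu_1)}{\hat\mu_1^2}\Big]\Big\}d\mathbb P(\mathbf x).
\]
The sign convention for the $Y_1$ term matters here. I will use the lemma's version, $(Y_2-\mu_2)\mu_1-(Y_1-\mu_1)\mu_2$; this is the Gateaux derivative of $\mu_2/\mu_1$. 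The ``$+$'' in the earlier display defining $\phi_a$ appears to be a typo.

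\emph{Step 2: split $\pi/\hat\pi$.} Write $\pi/\hat\pi=1+(\pi-\hat\pi)/\hat\pi$. The part multiplying $(\pi-\hat\pi)/\hat\pi$ gives exactly the first two integrals in the stated $R_2$:
\[
\frac{(\pi-\hat\pi)(\mu_2-\hat\mu_2)}{\hat\pi\hat\mu_1}-\frac{\hat\mu_2(\pi-\hat\pi)(\mu_1-\hat\mu_1)}{\hat\pi\hat\mu_1^2}.
\]
The part multiplying $1$ is
\[
R_\mu:=\frac{\hat\mu_2}{\hat\mu_1}-\frac{\mu_2}{\mu_1}+\frac{\mu_2-\hat\mu_2}{\hat\mu_1}-\frac{\hat\mu_2(\mu_1-\hat\mu_1)}{\hat\mu_1^2},
\]
which is the first-order Taylor remainder of $f(u,v)=v/u$ expanded at $(\hat\mu_1,\hat\mu_2)$ and evaluated at $(\mu_1,\mu_2)$, with a minus sign.

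\emph{Step 3: bound $R_\mu$.} This is the only step needing care; the main obstacle is getting the exact algebraic form rather than only a Taylor bound. Put everything over the common denominator $\mu_1\hat\mu_1^2$. Since $\hat\mu_2-\mu_2+\mu_2-\hat\mu_2=0$, we have $R_\mu=-\mu_2/\mu_1+\mu_2/\hat\mu_1-\hat\mu_2(\mu_1-\hat\mu_1)/\hat\mu_1^2$. Substituting $\mu_2=\hat\mu_2+(\mu_2-\hat\mu_2)$ gives
\[
R_\mu=\frac{(\mu_1-\hat\mu_1)\big[\hat\mu_2(\hat\mu_1-\mu_1)-\hat\mu_1(\mu_2-\hat\mu_2)\big]}{\mu_1\hat\mu_1^2}.
\]
Now apply Assumption (iii), $\mu_1\ge c$, together with the lower bound on $\hat\mu_1$ ensured by adding $\epsilon$ (assumed bounded below by a constant). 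Since $\hat\mu_1,\hat\mu_2\le 1$, this yields
\[
|R_\mu|\le C\big((\hat\mu_1-\mu_1)^2+|\hat\mu_1-\mu_1|\,|\hat\mu_2-\mu_2|\big).
\]
Overlap (i) and a lower bound on $\hat\pi$ keep every term in $R_2$ well defined.
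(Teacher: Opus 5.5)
Your proposal is correct and follows the paper's proof. You define $R_2$ through the expansion, integrate out $(A,Y_1,Y_2)$ with the Dirac device and the conditional means, split $\pi/\hat\pi=1+(\pi-\hat\pi)/\hat\pi$ to isolate the two product terms, and bound the Taylor remainder of $y/x$ at $(\hat\mu_1,\hat\mu_2)$. The paper uses a Lagrange-form remainder where you compute it exactly. Your sign convention for $R_\mu$ is the one consistent with the stated $+\int R_\mu$; under the paper's own definition its final display would need $-\int R_\mu$, which is harmless since only $|R_\mu|$ matters.

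One small algebra slip: the exact factorization is $R_\mu=(\mu_1-\hat\mu_1)\bigl[\hat\mu_2(\hat\mu_1-\mu_1)+\hat\mu_1(\mu_2-\hat\mu_2)\bigr]/(\mu_1\hat\mu_1^2)$. The second term carries a plus, not a minus, but since you pass to absolute values the bound is unaffected.
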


Lemma \ref{lem:vonmises} suggests how to debias the plug-in estimators, namely by estimating the "first order bias term" $-\int\phi_a(\mathbb{P})$ and subtracting it off, which leads directly to the construction of $\hat{\psi}^{dr}_a$.  This "one-step correction" approach can be viewed as a generalization of Newton methods. Intuitively, if the first-order bias is eliminated, the estimation error will be dominated by 
the second-order remainder $R_2(\hat{\mathbb{P}}, \mathbb{P})$, which only involves 
second-order errors of the nuisance estimators, and thus yields a faster convergence rate. Formally, we analyze the asymptotic properties of $\hat{\psi}^{\mathrm{dr}}_a$ via the decomposition:
\[
\resizebox{1.07\columnwidth}{!}{$
\hat\psi^{dr}_a-\psi_a(\mathbb P)
=
(\mathbb P_n-\mathbb P)\{\phi_a(\mathbb P)\}
+
(\mathbb P_n-\mathbb P)\{\phi_a(\hat{\mathbb P})-\phi_a(\mathbb P)\}
+
R_2(\hat{\mathbb P},\mathbb P)
$}
\]
which leads to the following theorem, based on Lemma \ref{lem:vonmises}.

\begin{theorem} \label{theo:asym}If
\begin{enumerate}
    \item $(\mathbb{P}_n - \mathbb{P})\{\phi_a( \hat{\mathbb{P}}) - \phi_a(\mathbb{P})\} = o_{\mathbb{P}}(1/\sqrt{n})$;
    \item $\|\hat{\pi}(a|\mathbf{x}) - \pi(a|\mathbf{x})\| = o_{\mathbb{P}}(n^{-1/4})$;
    \item $\|\hat{\mu}_1(a, \mathbf{x}) - \mu_1(a, \mathbf{x})\| = o_{\mathbb{P}}(n^{-1/4})$;
    \item $\|\hat{\mu}_2(a, \mathbf{x}) - \mu_2(a, \mathbf{x})\| = o_{\mathbb{P}}(n^{-1/4})$,
\end{enumerate}
then
\[
\hat{\psi}_a^{\mathrm{dr}} - \psi_a
= (\mathbb{P}_n - \mathbb{P})\{\phi_a(\mathbb{P})\} + o_{\mathbb{P}}(1/\sqrt{n}),
\]
and so $\hat{\psi}_a^{\mathrm{dr}}$ is root-n consistent, asymptotically normal by the central limit theorem and Slutsky’s theorem.
\end{theorem}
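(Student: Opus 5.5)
The plan is to start from the three-term decomposition stated just before Theorem \ref{theo:asym}:
\[
\hat\psi^{dr}_a-\psi_a(\mathbb P)=(\mathbb P_n-\mathbb P)\{\phi_a(\mathbb P)\}+(\mathbb P_n-\mathbb P)\{\phi_a(\hat{\mathbb P})-\phi_a(\mathbb P)\}+R_2(\hat{\mathbb P},\mathbb P).
\]
It is worth recording how this identity arises. By definition $\hat\psi^{dr}_a=\psi_a(\hat{\mathbb P})+\mathbb P_n\phi_a(\hat{\mathbb P})$. Lemma \ref{lem:vonmises} gives $\psi_a(\hat{\mathbb P})-\psi_a(\mathbb P)=-\mathbb P\phi_a(\hat{\mathbb P})+R_2$, and $\mathbb P\phi_a(\mathbb P)=0$ since $\phi_a$ is centered. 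Together these yield the display. It then suffices to show that the second and third terms are $o_{\mathbb P}(n^{-1/2})$.

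\textbf{Empirical process term.} The second term is $o_{\mathbb P}(n^{-1/2})$ directly by condition 1. If one wanted to justify condition 1 rather than assume it, the route would be sample splitting (cross-fitting) combined with Chebyshev's inequality, together with $L^2$ consistency of $\phi_a(\hat{\mathbb P})$. That consistency follows from conditions 2--4 and the lower bounds $\pi\ge c$ and $\mu_1\ge c$, provided $\hat\pi,\hat\mu_1$ are bounded away from zero.

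\textbf{Remainder term.} This is the main step. Each piece of $R_2$ from Lemma \ref{lem:vonmises} is bounded as follows.
\begin{itemize}
\item The first two pieces have weights $1/(\hat\pi\hat\mu_1)$ and $\hat\mu_2/(\hat\pi\hat\mu_1^2)$. These are uniformly bounded because $\hat\mu_2\le 1$ and because $\hat\pi,\hat\mu_1$ are bounded below by a constant with probability tending to one. The latter holds by overlap, by assumption (iii), and by uniform consistency, or by truncation at $c/2$.
\item With the weights bounded, Cauchy--Schwarz gives
\[
|R_2|\lesssim \|\hat\pi-\pi\|\,\|\hat\mu_2-\mu_2\|+\|\hat\pi-\pi\|\,\|\hat\mu_1-\mu_1\|+\|\hat\mu_1-\mu_1\|^2+\|\hat\mu_1-\mu_1\|\,\|\hat\mu_2-\mu_2\|.
\]
The last two summands come from integrating the pointwise bound on $R_\mu$.
\item By conditions 2--4, each product is $o_{\mathbb P}(n^{-1/4})\cdot o_{\mathbb P}(n^{-1/4})=o_{\mathbb P}(n^{-1/2})$.
\end{itemize}

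\textbf{Expected obstacle.} The delicate point is the norms: the $L^2$ norms in the conditions must be taken with respect to the measure appearing in $R_2$, namely $d\mathbb P(x)$ at the fixed treatment level $a$. One must also ensure the constant $C$ in the $R_\mu$ bound is uniform. I would handle both by invoking the boundedness of all nuisances and of their inverses on an event of probability tending to one.

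\textbf{Conclusion.} Combining the three terms gives $\hat\psi^{dr}_a-\psi_a=(\mathbb P_n-\mathbb P)\phi_a(\mathbb P)+o_{\mathbb P}(n^{-1/2})$. The function $\phi_a(\mathbb P)$ is mean zero with finite variance, since it is bounded under overlap and assumption (iii), interpreting $\delta$ as in the paper's continuous-treatment convention. The CLT then gives $\sqrt n(\mathbb P_n-\mathbb P)\phi_a(\mathbb P)\overset{d}{\rightarrow} N(0,\mathrm{Var}\,\phi_a(\mathbb P))$. Slutsky's theorem absorbs the $o_{\mathbb P}(1)$ term after multiplying by $\sqrt n$, which yields root-$n$ consistency and asymptotic normality.
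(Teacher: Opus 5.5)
Your proposal matches the paper's proof: the same three-term decomposition, condition 1 for the empirical-process term, Cauchy--Schwarz on the two cross terms of $R_2$ together with the integrated bound on $R_\mu$, and then the CLT with Slutsky's theorem. The only difference is that you justify the lower bounds on $\hat\pi$ and $\hat\mu_1$, whereas the paper simply assumes them. Truncation at $c/2$ is the valid way to get them, since conditions 2--4 give $L^2$-type rates, not uniform consistency.
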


Condition 1 can be satisfied by cross-fitting \cite{kennedy2024semiparametric}. The conversion rate $o_{\mathbb{P}}(n^{-1/4})$ in conditions 2 and 3 can be satisfied by various estimators including neural networks \cite{chernozhukov2018double, farrell2021deep}. In contrast, root-$n$ consistency is a desirable property that is difficult for 
nonparametric estimators to achieve. Theorem~\ref{theo:asym} shows that 
$\hat{\psi}_a^{\mathrm{dr}}$ attains this property under mild assumptions, 
allowing us to obtain accurate estimates using flexible nuisance estimators. 
This is precisely the asymptotic meaning of ``double robustness'' \cite{kennedy2024semiparametric,chen2024doubly}: 
the final estimator converges faster than any of the nuisance estimators, 
and is therefore robust to model misspecification. (Kernelized IF can be used to improve smoothness under continuous treatment setting\cite{kennedy2017non}, and it can be extended to heterogeneous causal effects estimation by pseudo-outcome regression \cite{kennedy2023towards}. We do not elaborate on these standard practices in detail since they are not specific to the target estimand and is not the focus of our study.)

The theoretical results above lay the foundation for constructing the models 
for causal effect estimation on CVR. However, the direct one-step correction approach may 
be unstable in practice, and can even produce estimates outside the parameter space 
\cite{kennedy2024semiparametric}. In our setting specifically, $\hat{\mu}_1$ appears 
in the denominator of the correction term $\mathbb{P}_n ( \phi_a( \hat{\mathbb{P}}))$. Since click-through rates are often low in practice, the correction term may become extremely large, further amplifying this instability. Furthermore, the cross-fitting required by condition 1, which estimates nuisance parameters using sample-splitting across multiple folds, introduces additional complexity. To address these issues, 
we extend the idea of targeted regularization to our target estimand based on the findings 
above, and develop a more practical framework.

\section{A Practical Framework Based on Targeted Regularization}

As discussed above, the basic one-step correction procedure consists of two stages:
(1) estimating nuisance parameters (which typically involves cross-fitting), 
(2) predicting the correction term and adding it to the plug-in estimator.
In practice, this two-stage procedure has been found to be insufficiently stable.
Inspired by TMLE\cite{van2011targeted}, the widely used DragonNet~\cite{shi2019adapting} introduces \emph{targeted regularization}. The idea is to force the correction term $\mathbb{P}_n(\phi_a(\hat{\mathbb{P}})) \approx 0$ with an extra regularizer, and recover it by learning a low-dimensional parameter $\epsilon$.
Extensive empirical results demonstrate that this approach yields a more stable and practically friendly framework.
Nie et al.~\cite{nie2021vcnet} further extend this framework to continuous treatment settings. However, the above methods are designed for standard causal estimand, and both their methodological construction and theoretical analysis largely rely on the well-studied doubly robust estimator for $\mathbb{E}\big[\mathbb{E}(Y \mid X, A = a)\big]$. In contrast, such an established theoretical foundation is not available for our setting, which is a key challenge of this work.
Building on the preceding theoretical results, we now extend this framework to our target estimand.

For the standard estimand $\psi'=\mathbb{E}\big[\mathbb{E}(Y \mid X, A = a)\big]$, 
the influence function is $\frac{\delta(A = a)}{\pi(a|\mathbf{x})} (Y - \mu(\mathbf{x},a)) +\mu(\mathbf{x},a)- \psi'$, 
and the targeted regularization is constructed as \cite{shi2019adapting,nie2021vcnet}:

\[
\mathcal{R}'(y,\hat{\mu},\hat{\pi},\hat{\epsilon})
= \frac{1}{n} \sum_{i=1}^n
 \left( y_{i} - \hat{\mu}_{i} - \frac{\hat{\epsilon}(a_i)}{\hat{\pi}_{i}} \right)^2.
\]
which is minimized directly during the training phase, $\epsilon$ is an additional learnable parameter. The final estimator becomes:
\[
\hat{\psi}_a^{tr} = \hat{\psi}_a^{plug-in}+\frac{1}{n} \sum_{i=1}^n \frac{\hat{\epsilon}(a)}{\hat{\pi}_i}.
\]
Intuitively, it trains an additional parameter $\epsilon$ to approximate the correction term during the training phase. According to the influence function of our target estimand (as mentioned in the previous section), we adapt targeted regularization as follows:
\[
\mathcal{R}(\hat{\mu}_1,\hat{\mu}_2,\hat{\pi},\hat{\epsilon})
=
\frac{1}{n}\sum_{i=1}^n
\left[
\textstyle\frac{(y_{2i}-\hat{\mu}_{2i})}{\hat{\mu}_{1i}}
-\textstyle\frac{(y_{1i}-\hat{\mu}_{1i})\hat{\mu}_{2i}}
{\hat{\mu}_{1i}^2}
-\textstyle\frac{\hat{\epsilon}(a_i)}{\hat{\pi}_i}
\right]^2.
\]
Combining the loss function of plug-in estimator, we define the final loss function as:
\[
    \mathcal{L}_{TR}(\hat\mu_1,\hat\mu_2, \hat\pi, \hat\epsilon) = \mathcal{L}(\hat\mu_1,\hat\mu_2,\hat\pi) + \beta \mathcal{R}(\hat{\mu}_1, \hat\mu_2, \hat\pi, \hat\epsilon),
\]

Under continuous treatment setting, optimizing $\epsilon(a)$ over the function space of all mappings from $\mathcal{A}$ to $\mathbb{R}$ is not feasible in practice, we follow \cite{nie2021vcnet} to use splines $\{B_k\}_{k=1}^{K_n}$ with $K_n$ basis function $B_k(\cdot)$ to approximate $\epsilon$. 

We now leverage Lemma~\ref{lem:vonmises} and Theorem~\ref{theo:asym} to derive the asymptotic properties of $\hat{\psi}_a^{\mathrm{tr}}$. We first state several standard and mild assumptions commonly adopted in the literature~\cite{nie2021vcnet}:
\begin{enumerate}
    \item[(i)] There exists constant $c > 0$ such that for any $a \in \mathcal{A}$, $\mathbf{x} \in \mathcal{X}$, and $\hat{\pi} \in \mathcal{U}$, we have $1/c \leq \hat{\pi}(a\mid \mathbf{x}) \leq c$, $1/c \leq \pi(a\mid \mathbf x) \leq c$, $\|\mathcal{Q}\|_{\infty} \leq c$ and $\|\mu_1\|_{\infty} \leq c$,$\|\mu_2\|_{\infty} \leq c$.
    \item[(ii)] $\pi, \mu_1,\mu_2 $ and $\hat{\pi},\hat{\mu}_1,\hat{\mu}_2$ have bounded second derivatives for any $\hat{\pi} \in \mathcal{Q}$ $\hat{\mu}_1 \in \mathcal{U}_1$ and $\hat{\mu}_2 \in \mathcal{U}_2$
    \item[(iii)] The Rademacher complexities satisfy $\operatorname{Rad}_n(\mathcal{G}), \operatorname{Rad}_n(\mathcal{Q})$, $\operatorname{Rad}_n(\mathcal{U}_1)$, and $\operatorname{Rad}_n(\mathcal{U}_2) = O(n^{-1/2})$.
    \item[(iv)] $\mathcal{B}_{K_n}$ equals the closed linear span of B-spline with equally spaced knots, fixed degree, and dimension $K_n \asymp n^{1/6}$.
\end{enumerate}

\begin{theorem}\label{thm:targ}
Under the assumptions above and the loss function $\mathcal{L}_{TR}$, we have
\begin{align*}
\|\hat{\psi}_a^{\mathrm{tr}} - \psi_a\|
= & O_p\big(n^{-1/3}\sqrt{\log n} 
+ r_1(n)r_2(n) \\
&+r_1(n)r_3(n) + r_2(n)r_3(n) + r_2(n)^2 \big), \\
\text{where} \quad
&\|\hat{\pi} - \pi\|_{\infty} = O_p(r_1(n)), 
\|\hat{\mu}_1 - \mu_1\|_{\infty} = O_p(r_2(n)), \\
&\|\hat{\mu}_2 - \mu_2\|_{\infty} = O_p(r_3(n)).
\end{align*}
\end{theorem}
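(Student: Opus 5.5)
The plan follows the VCNet argument of \cite{nie2021vcnet}, adapted to the ratio estimand via Lemma~\ref{lem:vonmises}. Write $\hat{\mathbb P}$ for the distribution induced by the minimizers $(\hat\mu_1,\hat\mu_2,\hat\pi,\hat\epsilon)$ of $\mathcal L_{TR}$. Set
\[
\hat{\mathcal D}_i=\frac{y_{2i}-\hat\mu_{2i}}{\hat\mu_{1i}}-\frac{(y_{1i}-\hat\mu_{1i})\hat\mu_{2i}}{\hat\mu_{1i}^2},
\]
so that the regularizer is $\mathcal R=\mathbb P_n[(\hat{\mathcal D}-\hat\epsilon(A)/\hat\pi)^2]$, and the correction term in $\hat\psi^{dr}_a$ is $\hat{\mathcal D}$ weighted by $\delta(A=a)/\hat\pi$. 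The first step is to use the first-order optimality of $\hat\epsilon\in\mathcal B_{K_n}$. Since $\hat\epsilon=\sum_k\gamma_kB_k$ enters $\mathcal L_{TR}$ only through $\mathcal R$, stationarity in each $\gamma_k$ gives
\[
\mathbb P_n\Big[\frac{B_k(A)}{\hat\pi}\Big(\hat{\mathcal D}-\frac{\hat\epsilon(A)}{\hat\pi}\Big)\Big]=0,\qquad k=1,\dots,K_n .
\]
This is a localized (spline-smoothed) version of the statement that the one-step correction for the targeted fit is zero. Hence $\hat\psi^{tr}_a$, which is the plug-in estimator with $\hat\mu_2/\hat\mu_1$ plus $\hat\epsilon(a)$ times $\mathbb P_n[1/\hat\pi]$, should be compared with $\hat\psi^{dr}_a$ computed at the perturbed outcome model. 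In the continuous-treatment version, the Dirac $\delta(A=a)$ is replaced by its projection onto $\mathcal B_{K_n}$.

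Next, decompose $\hat\psi^{tr}_a-\psi_a$ into three pieces. The first is the von Mises expansion of Lemma~\ref{lem:vonmises} applied at $\hat{\mathbb P}$. The second is an empirical-process term $(\mathbb P_n-\mathbb P)$ applied to functions in the classes $\mathcal G,\mathcal Q,\mathcal U_1,\mathcal U_2$ multiplied by the splines. The third is a spline approximation and projection term. I would bound each as follows.

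\textbf{(a) Second-order remainder.} $R_2(\hat{\mathbb P},\mathbb P)$ is bounded using assumptions (i) and (iii) of Section 3 and assumption (i) here, which keep $\hat\pi$ and $\hat\mu_1$ away from zero and bound the weights $1/(\hat\pi\hat\mu_1)$ and $\hat\mu_2/(\hat\pi\hat\mu_1^2)$. Taking sup norms then gives $O_p(r_1r_3+r_1r_2+r_2^2+r_2r_3)$. This is exactly the product structure claimed in the theorem.

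\textbf{(b) Empirical-process term.} Symmetrization and the contraction inequality handle the Lipschitz maps $(\mu_1,\mu_2,\pi)\mapsto\hat{\mathcal D}/\hat\pi$, which are Lipschitz on the bounded domain. Combined with the $O(n^{-1/2})$ Rademacher complexity bounds of assumption (iii) and a union/maximal inequality over the $K_n$ spline coordinates, this yields a uniform bound of order $\sqrt{K_n\log n/n}$.

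\textbf{(c) Spline approximation and localization.} Replacing the point-evaluation at $a$ by the $L^2$ projection onto $\mathcal B_{K_n}$ costs $O(K_n^{-2})$ bias, by assumption (ii) (bounded second derivatives) and standard B-spline approximation theory. It also inflates the variance by a factor $K_n$, because the spline-localized influence function has variance of order $K_n/n$.

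The step I expect to be the main obstacle is (c) combined with (b): turning the $K_n$ normal equations into a pointwise statement at $a$. To do this I would express $\hat\epsilon(a)$ through the Gram matrix $\mathbb P_n[B(A)B(A)^\top/\hat\pi^2]$. I would then show that its smallest eigenvalue is of order $K_n^{-1}$ with high probability, using matrix concentration together with the bounded density of $A$ from assumption (i). After that, I would bound $\|B(a)\|$ by $O(1)$ in the sup-normalized basis. This yields the estimate
\[
|\hat\psi^{tr}_a-\psi_a|\lesssim K_n^{-2}+\sqrt{K_n\log n/n}+R_2 .
\]
Balancing the first two terms with $K_n\asymp n^{1/6}$ is not exactly optimal; the choice in assumption (iv) instead matches the rate in the statement. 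With $K_n\asymp n^{1/6}$, the variance term is $\sqrt{n^{1/6}\log n/n}=n^{-5/12}\sqrt{\log n}$ and the bias term is $K_n^{-2}=n^{-1/3}$. Both are therefore $O(n^{-1/3}\sqrt{\log n})$, and adding the bound from (a) gives the stated rate.
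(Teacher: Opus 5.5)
Your decomposition is essentially the paper's. Its $T_2=R_2(\hat{\mathbb P},\mathbb P)$ is your (a), and its $T_1,T_3$ cover your (b)--(c). The one difference in route is that the paper imports $\|\hat\epsilon-\check\epsilon\|=O_p(n^{-1/3}\sqrt{\log n})$ from Lemma~3 of \cite{nie2021vcnet}, whereas you re-derive it from the normal equations.

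\textbf{The gap.} You assume that stationarity of $\hat\epsilon$ cancels the first-order term up to spline and sampling error. It does not when $\hat\pi\neq\pi$. Put $m(\mathbf x)=\mathbb E[\hat{\mathcal D}\mid\mathbf X=\mathbf x,A=a]$. The population, fully localized version of your normal equations is solved by the paper's $\check\epsilon(a)=\int\pi m\hat\pi^{-1}\,d\mathbb P(\mathbf x)\big/\int\pi\hat\pi^{-2}\,d\mathbb P(\mathbf x)$. The estimator adds $\check\epsilon(a)\int\hat\pi^{-1}\,d\mathbb P(\mathbf x)$, but Lemma~\ref{lem:vonmises} requires cancelling $\int\phi_a(\hat{\mathbb P})\,d\mathbb P=\int\pi m\hat\pi^{-1}\,d\mathbb P(\mathbf x)$. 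The leftover is
\[
\check\epsilon(a)\int\frac{\hat\pi-\pi}{\hat\pi^{2}}\,d\mathbb P(\mathbf x)=O_p\big(r_1(n)\,(r_2(n)+r_3(n))\big).
\]
This term is not part of $R_2$, the spline bias, or any empirical-process error, so your final display omits it. It is exactly what the paper bounds inside $T_1$ to obtain $r_1r_2+r_1r_3$.

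The ``DR at the perturbed outcome model'' heuristic cannot absorb it the way it does in DragonNet/VCNet, where $Y-\hat\mu-\epsilon/\hat\pi$ is the influence-function residual at $\hat\mu+\epsilon/\hat\pi$. Here $\hat{\mathcal D}=Y_2/\hat\mu_1-Y_1\hat\mu_2/\hat\mu_1^2$ vanishes on $\{Y_1=Y_2=0\}$ for every $(\hat\mu_1,\hat\mu_2)$. Hence $\hat{\mathcal D}-\epsilon/\hat\pi$ is the correction at no perturbed model unless $\epsilon=0$. The term has the product order already present in the theorem, so the conclusion survives once you add it and bound it.

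\textbf{A smaller issue: the stochastic term.} Your route uses the smallest eigenvalue of $\hat G=\mathbb P_n[B(A)B(A)^\top/\hat\pi^2]$ being of order $K_n^{-1}$ and $\|B(a)\|_2=O(1)$. That yields $K_n\sqrt{\log n/n}$, not $\sqrt{K_n\log n/n}$. Even this requires per-coordinate deviations of order $(nK_n)^{-1/2}$, which the global Rademacher bound (iii) does not directly provide. Without such localization the route gives $K_n^{3/2}n^{-1/2}=n^{-1/4}$, which is too slow.

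A cleaner fix has three ingredients:
\begin{itemize}
\item $\|\hat G^{-1}\|_{\ell_\infty\to\ell_\infty}\lesssim K_n$, since $K_n\hat G$ is banded and well conditioned;
\item $\|B(a)\|_1\le 1$;
\item an unlocalized maximal inequality over the $K_n$ coordinates.
\end{itemize}
Together these give $K_n\sqrt{\log n/n}=n^{-1/3}\sqrt{\log n}$. Balancing this against $K_n^{-2}$ is exactly what $K_n\asymp n^{1/6}$ achieves, so that choice is in fact balanced, contrary to your remark.
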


The proof of Theorem~\ref{thm:targ} is in the Appendix. The main idea is to decompose $\hat{\psi}_a^{\mathrm{tr}} - \psi_a$ into three terms, among which a key component is exactly the remainder $R_2(\hat{\mathbb{P}}, \mathbb{P})$ derived in the previous section.
Once the first-order bias is controlled by targeted regularization, the convergence rate is dominated by this remainder, allowing us to leverage the results established earlier to obtain the conclusion.

Theorem~\ref{thm:targ} shows that under mild regularity conditions, $\hat{\psi}_a^{\text{tr}}$ maintains the doubly robust property, i.e. it achieves a convergence rate exceeding the individual convergence rates of the nuisance parameter estimators, and is therefore robust to model misspecification, allowing the use of flexible nonparametric estimators, including neural networks.

As discussed in \cite{nie2021vcnet}, adding the extra regularizer does not affect the consistency of nuisance parameter estimation and can substantially improve estimation stability.  The magnitude of the targeted regularization term is controlled by a hyperparameter $\beta$, which allows it to be kept on a comparable scale to the main loss function.
Intuitively, this approach replaces the “hard” correction in the original DR estimator with a “soft” correction via regularization. In practice, stability can be further enhanced by controlling the learning rate of $\epsilon$ or by introducing additional regularization on $\epsilon$.
Moreover, this end-to-end method does not involve cross-fitting, which leads to substantially higher efficiency in realistic settings involving complex models. Therefore,
we adopt this approach as the primary modeling framework.


As discussed above, obtaining the final estimator requires estimating three nuisance parameters: $\pi$, $\mu_1$, and $\mu_2$, where $\mu_1$ corresponds exactly to the click probability.
A natural idea is that once $\hat{\pi}$ and $\hat{\mu}_1$ are obtained, one can immediately construct an estimator for the causal effect of the click-through rate (CTR):
\[
\psi^{ctr}_a = \mathbb{E}\!\left[\mathbb{E}(Y_1 \mid \mathbf{X}, A = a)\right].
\]
Motivated by this observation, we design a \emph{multi-task} model to jointly estimate the causal effects of CTR and CVR, denoted by $\hat{\psi}^{ctr}_a$ and $\hat{\psi}^{cvr}_a$.
The overall loss function becomes:
\[
\mathcal{L}_{multi-task} = \mathcal{L}_1 + \mathcal{L}_2 + \alpha\mathcal{L}_{\pi} + \beta(\mathcal{R}_1 + \mathcal{R}_2)
\]
where
\[
\mathcal{L}_{\pi}
= \frac{1}{n} \sum_{i=1}^n
- \log\big(\hat{\pi}(a_i \mid \mathbf{x}_i)\big),
\]
\[
\mathcal{L}_1
= \frac{1}{n} \sum_{i=1}^n
\Big(
- y_{1i} \log(\hat{\mu}_{1i})
- (1 - y_{1i}) \log(1 - \hat{\mu}_{1i})
\Big),
\]
\[
\mathcal{L}_2
= \frac{1}{n} \sum_{i=1}^n
\Big(
- y_{2i} \log(\hat{\mu}_{2i})
- (1 - y_{2i}) \log(1 - \hat{\mu}_{2i})
\Big),
\]
\[
\mathcal{R}_1
= \frac{1}{n} \sum_{i=1}^n
\left(
y_{1i} - \hat{\mu}_{1i}
- \frac{\hat{\epsilon}_1(a_i)}{\hat{\pi}_i}
\right)^2,
\]
\[
\mathcal{R}_2
= \frac{1}{n} \sum_{i=1}^n
\left[
\frac{y_{2i} - \hat{\mu}_{2i}}{\hat{\mu}_{1i}}
- \frac{(y_{1i} - \hat{\mu}_{1i}) \hat{\mu}_{2i}}{\hat{\mu}_{1i}^2}
- \frac{\hat{\epsilon}_2(a_i)}{\hat{\pi}_i}
\right]^2.
\]

The final estimators are given by
\[
\hat{\psi}^{ctr}_a
= \frac{1}{n} \sum_{i=1}^n
[
\hat{\mu}_{1i}
+ \frac{\hat{\epsilon}_1(a)}{\hat{\pi}_i}
],
\hat{\psi}^{cvr}_a
= \frac{1}{n} \sum_{i=1}^n
[
\frac{\hat{\mu}_{2i}}{\hat{\mu}_{1i}}
+ \frac{\hat{\epsilon}_2(a)}{\hat{\pi}_i}
].
\]

Figure~\ref{fig:model} illustrates the overall model architecture.
To explicitly model the dependency $\mu_2 < \mu_1$, we construct the prediction of $\mu_2$ as
$\hat{\mu}_2 = \hat{\mu}_1 \times \tilde{\mu}_2$, where $\tilde{\mu}_2 \in (0,1)$.
Motivated by the TMLE, targeted regularization should not affect the estimation of $\hat{\pi}$.
Accordingly, gradients from the targeted regularization terms $\mathcal{R}_1$ and $\mathcal{R}_2$ are blocked from propagating to $\hat{\pi}$, which improves the accuracy of propensity score estimation.

\begin{figure}[htbp]
    \centering
    \includegraphics[width=0.4\textwidth]{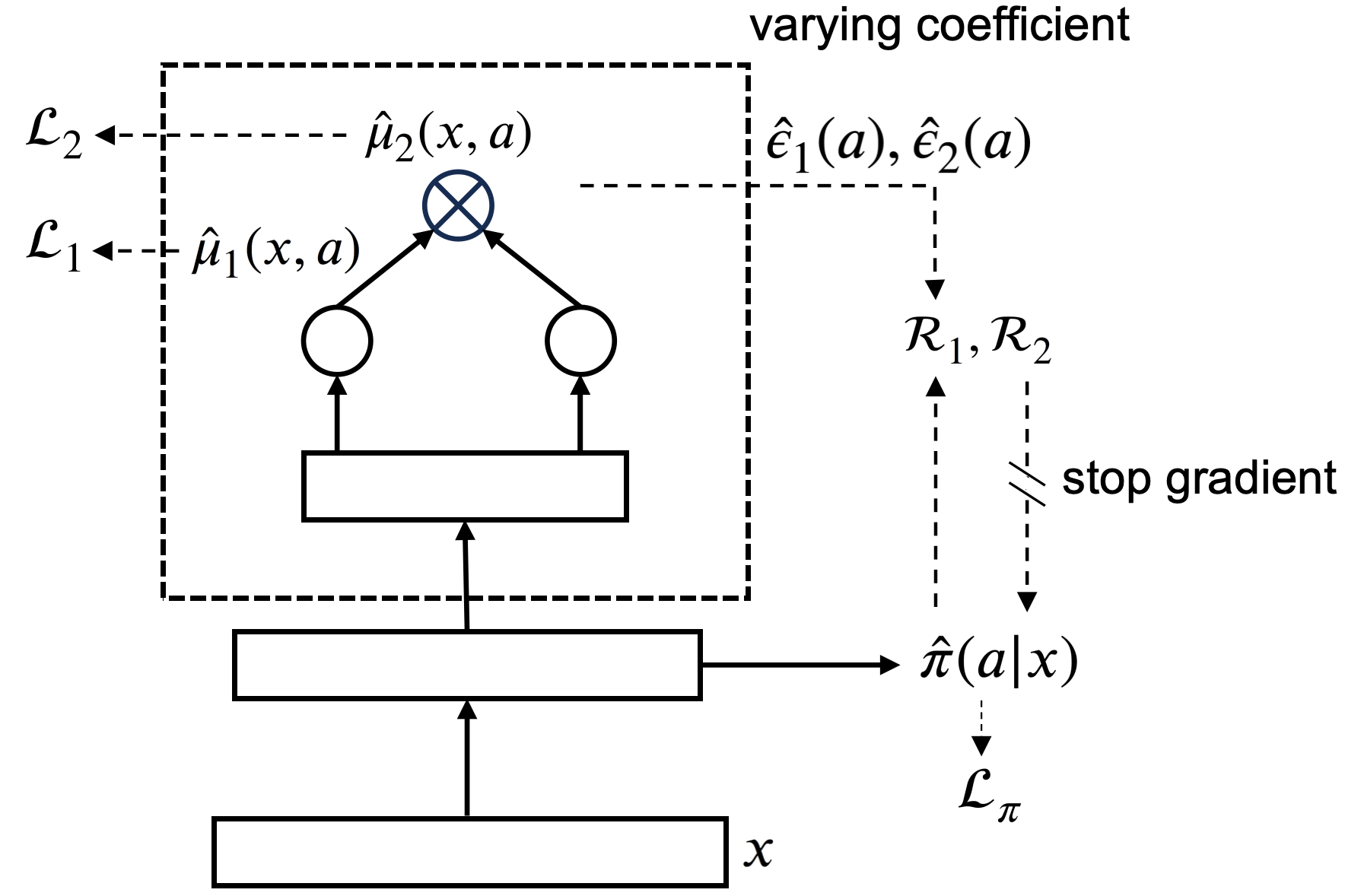} 
    \caption{Model architecture.}
    \label{fig:model}
\end{figure}

\section{Experiments}
Our experiments address the following questions:
\begin{itemize}
  \item \textbf{RQ1:} 
  For our target estimand, does the proposed method outperform existing baseline models?

  \item \textbf{RQ2:} 
  Does the targeted regularization lead to improved estimation performance?

  \item \textbf{RQ3:} 
  Is the proposed method stable under different hyperparameter settings?

  \item \textbf{RQ4:} 
  Is it necessary to redesign the estimation framework, or can existing loss-debiasing approach developed for CVR prediction be directly combined with standard causal effect estimation methods?
\end{itemize}
\subsection{Setup}
\subsubsection{Synthetic and Semi-synthetic Data Generation}

To obtain the true value of the target estimand, we generate synthetic datasets following \cite{nie2021vcnet}, each dataset consists of 3{,}000 training samples and 1{,}000 test samples. Further, we reuse the 498-dimensional covariates $X$ of the real-word dataset News to generate the assigned treatments and their corresponding outcomes following \cite{schwab2020learning, bica2020estimating, nie2021vcnet}, and each dataset is randomly split into a training set (67\%) and a test set (33\%). Hyperparameters are tuned on 20 extra synthetic datasets disjoint from the test samples. We provide details of data generation in Appendix A.4.

Table~\ref{tab:data_sum} reports the means and standard deviations of the observed outcomes $Y_1$ and $Y_2$ on both simulated and semi-synthetic datasets (with one representative sample selected).
As shown, the mean of $Y_1$ is below $0.5$, and overall $P(Y_2 = 1 \mid Y_1 = 1) < P(Y_1 = 1)$, which is consistent with real-world scenarios.
\setlength{\tabcolsep}{2pt}
\begin{table}[htbp]
\centering
\caption{Summary Statistics of Outcomes}
\label{tab:data_sum}
\begin{tabular}{lccc|ccc}
\toprule
 & \multicolumn{3}{c|}{Synthetic Data}
 & \multicolumn{3}{c}{News} \\
\cmidrule(lr){2-4} \cmidrule(lr){5-7}
 & $Y_1$ & $Y_2$ & $Y_2 \mid Y_1=1$
 & $Y_1$ & $Y_2$ & $Y_2 \mid Y_1=1$ \\
\midrule
Mean
 & 0.301 & 0.052 & 0.170
 & 0.347 & 0.080 & 0.231 \\
Std.\ Dev.
 & 0.459 & 0.222 & 0.375
 & 0.476 & 0.272 & 0.421 \\
\bottomrule
\end{tabular}
\end{table}

\subsubsection{Baselines and Settings}
We compare our method with: 
  
  
  
  


(1) \textbf{DragonNet}~\cite{shi2019adapting}:
Introduces targeted regularization for causal effect estimation, requiring discretization for continuous treatments;
(2) \textbf{VCNet}~\cite{nie2021vcnet}:
Extends DragonNet to continuous treatments via varying-coefficient networks and spline functions;
(3) \textbf{DRNet}~\cite{schwab2020learning}:
Estimates continuous treatment effects by partitioning the treatment space while preserving treatment information through hierarchical inputs;
(4) \textbf{TARNet}~\cite{shalit2017estimating}:
Learns a shared representation across treatment groups;
(5) \textbf{Causal Forest}~\cite{athey2019generalized}:
Generalizes random forests to estimate causal effects;
(6) \textbf{ECUP}~\cite{huang2024entire}: 
A recent method focusing on treatment effect estimation for CVR, and leverages attention mechanisms to enhance the interaction between the treatment and other features;
(7) \textbf{DR Estimator}:
The basic doubly robust estimator proposed in Section~4 with cross-fitting;
(8) \textbf{Ours (w/o.\ t-reg)}:
Our method without the designed targeted regularization, which reduces to a plug-in estimator;

Since we focus on the underlying methodological framework, our baselines exclude approaches that solely target model architecture design (e.g.,~\cite{liu2023explicit}). Our backbone network can be replaced by arbitrary architectures; here, we unify the model capacity (e.g., the number of layers and hidden layer dimensions) across all methods for fair comparison. Some baselines require adaptation for continuous treatment, and we follow the implementation in \cite{nie2021vcnet}.

Each model is trained for 600 epochs. 
The loss weights $\alpha$ and $\beta$
are set to 0.5 and 1, respectively. The learning rate and the number of hidden units are selected via grid search within 
$\{10^{-5},10^{-4},5\times10^{-4},10^{-3}\}$ and $\{8,32,128\}$
(we observed that model performance is insensitive to the scale of model capacity). 
All other settings are kept the same as in \cite{nie2021vcnet}. 
We replace the MSE loss in baseline models with cross-entropy loss since the outcomes are binary.


\subsubsection{Metric}
We use the average mean squared error (AMSE) as the primary evaluation metric, 
which is consistent with prior studies on causal effect estimation with continuous treatment \cite{nie2021vcnet}.
\[
\text{AMSE} = \frac{1}{S} \sum_{s=1}^{S} \int_{A} \big[\hat{\psi}^s_a - \psi_a\big]^2 \pi(a) \, da,
\]
where $\hat{\psi}^s_a$ denotes the estimate obtained in the $s$-th simulation.

\subsection{Result and Analysis}
\subsubsection{Overall Performance and ablation study}

Table \ref{tab:performance} presents the AMSE of our proposed multi-task framework and baseline models on synthetic data and News, where the ``CVR task'' and ``CTR task'' correspond to the estimation of $\hat{\psi}^{\mathrm{cvr}}_a$ and $\hat{\psi}^{\mathrm{ctr}}_a$, respectively.
Several key observations can be drawn from the results:

\begin{itemize}
  \item Overall, our proposed method significantly outperforms all baselines on the CVR task across different datasets, demonstrating strong effectiveness. The performance on the CTR task remains desirable when additional task and regularization terms are introduced, validating the feasibility of jointly estimating the causal effects on CTR and CVR.
  \item Without the designed targeted regularization, the performance of our method drops significantly, demonstrating its effectiveness and necessity.
  \item The performance of the basic DR estimator is also competitive; however, it is inferior to our final proposed method, which is consistent with the discussion in previous sections and findings in \cite{shi2019adapting,nie2021vcnet}.

\end{itemize}

\begin{table*}[t]
\centering
\caption{Results on Synthetic Data and News. Reported: AMSE (mean $\pm$ standard deviation) over 20 simulations.}
\label{tab:performance}

\begin{tabular*}{0.75\textwidth}{@{\extracolsep{\fill}}lcc|cc}
\toprule
 & \multicolumn{2}{c|}{Synthetic Data} & \multicolumn{2}{c}{News} \\
\cmidrule(lr){2-3} \cmidrule(lr){4-5}
 & $\textbf{CVR Task}$ & CTR Task & $\textbf{CVR Task}$ & CTR Task \\
\midrule
DragonNet
& $0.01130_{\pm 0.00359}$ & $0.00115_{\pm 0.00032}$
& $0.00706_{\pm 0.00157}$ & $0.00102_{\pm 0.00025}$ \\

VCNet
& $0.00935_{\pm 0.00287}$ & $0.00093_{\pm 0.00031}$
& $0.00325_{\pm 0.00184}$ & $0.00040_{\pm 0.00017}$ \\

DRNet
& $0.01375_{\pm 0.00392}$ & $0.00423_{\pm 0.00173}$
& $0.01112_{\pm 0.00320}$ & $0.00178_{\pm 0.00083}$ \\

TARNet
& $0.01529_{\pm 0.00320}$ & $0.00621_{\pm 0.00147}$
& $0.01163_{\pm 0.00224}$ & $0.00149_{\pm 0.00039}$ \\

Causal Forest
& $0.01031_{\pm 0.00172}$ & $0.00878_{\pm 0.00080}$
& $0.00341_{\pm 0.00055}$ & $0.00132_{\pm 0.00014}$ \\

ECUP
& $0.00816_{\pm 0.00209}$ & $0.00145_{\pm 0.00062}$
& $0.00233_{\pm 0.00057}$ & $0.00061_{\pm 0.00032}$ \\

DR Estimator
& $0.00511_{\pm 0.00298}$ & $0.00310_{\pm 0.00153}$
& $0.00321_{\pm 0.00202}$ & $0.00129_{\pm 0.00076}$ \\

\midrule
Ours(w/o. t-reg)
& $0.00981_{\pm 0.00346}$ & $0.00473_{\pm 0.00122}$
& $0.00698_{\pm 0.00249}$ & $0.00132_{\pm 0.00076}$ \\

Ours
& $\pmb{0.00248_{\pm 0.00087}}$ & $0.00096_{\pm 0.00031}$
& $\pmb{0.00101_{\pm 0.00052}}$ & $0.00047_{\pm 0.00025}$ \\
\bottomrule
\end{tabular*}
\end{table*}

\subsubsection{Hyperparameter Sensitivity Analysis}
We conduct a sensitivity analysis by varying the propensity score loss weight $\alpha$ and the targeted regularization weight $\beta$, and evaluate the resulting AMSE on the CVR task. As shown in the Figure ~\ref{fig:sens_alpha}, our method exhibits strong stability across a wide range of $\alpha$ values, the AMSE consistently remains substantially lower than that of the ablation study, demonstrating the robustness of our method. In contrast, Figure~\ref{fig:sens_beta} reveals a monotonic degradation in performance as the targeted regularization weight $\beta$ decreases, which further verifies the effectiveness and necessity of this component.

\begin{figure}[htbp]
    \centering
    \begin{subfigure}{0.49\linewidth}
        \centering
        \includegraphics[width=\linewidth]{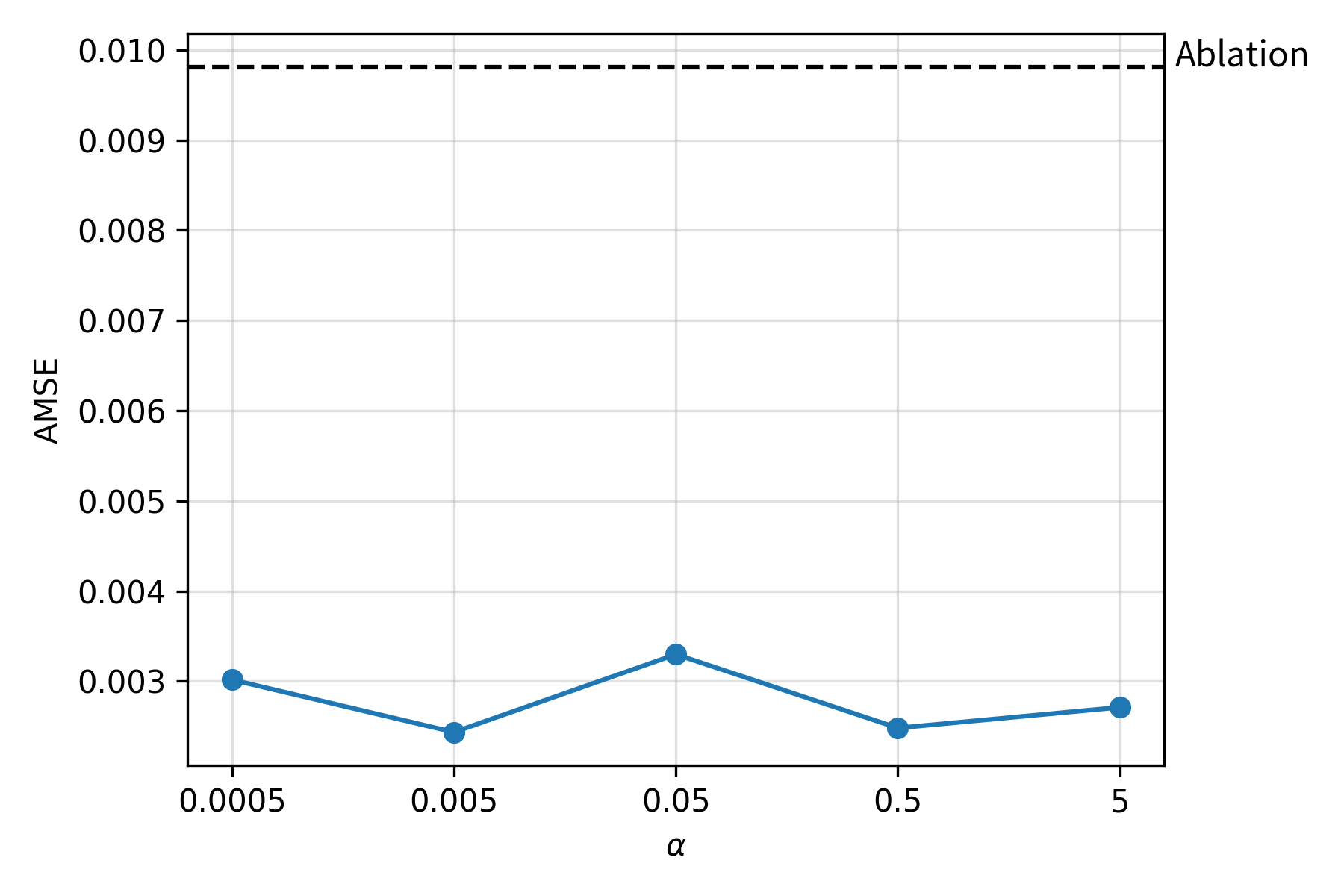}
        \caption{Synthetic Data}
    \end{subfigure}
    \hfill
    \begin{subfigure}{0.49\linewidth}
        \centering
        \includegraphics[width=\linewidth]{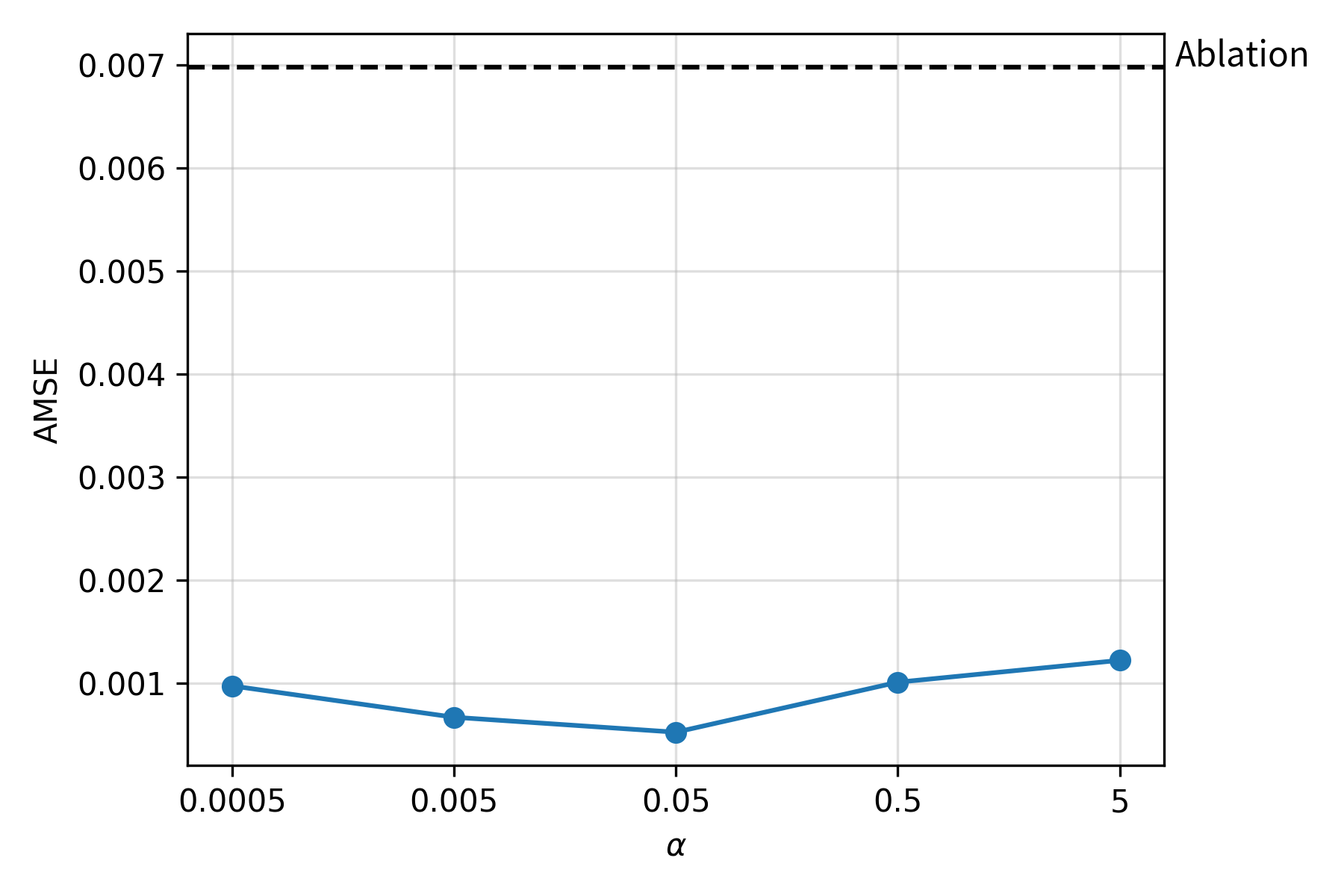}
        \caption{News}
    \end{subfigure}

    \caption{AMSE under different settings of $\alpha$.}
    \label{fig:sens_alpha}
\end{figure}

\begin{figure}[htbp]
    \centering
    \begin{subfigure}{0.49\linewidth}
        \centering
        \includegraphics[width=\linewidth]{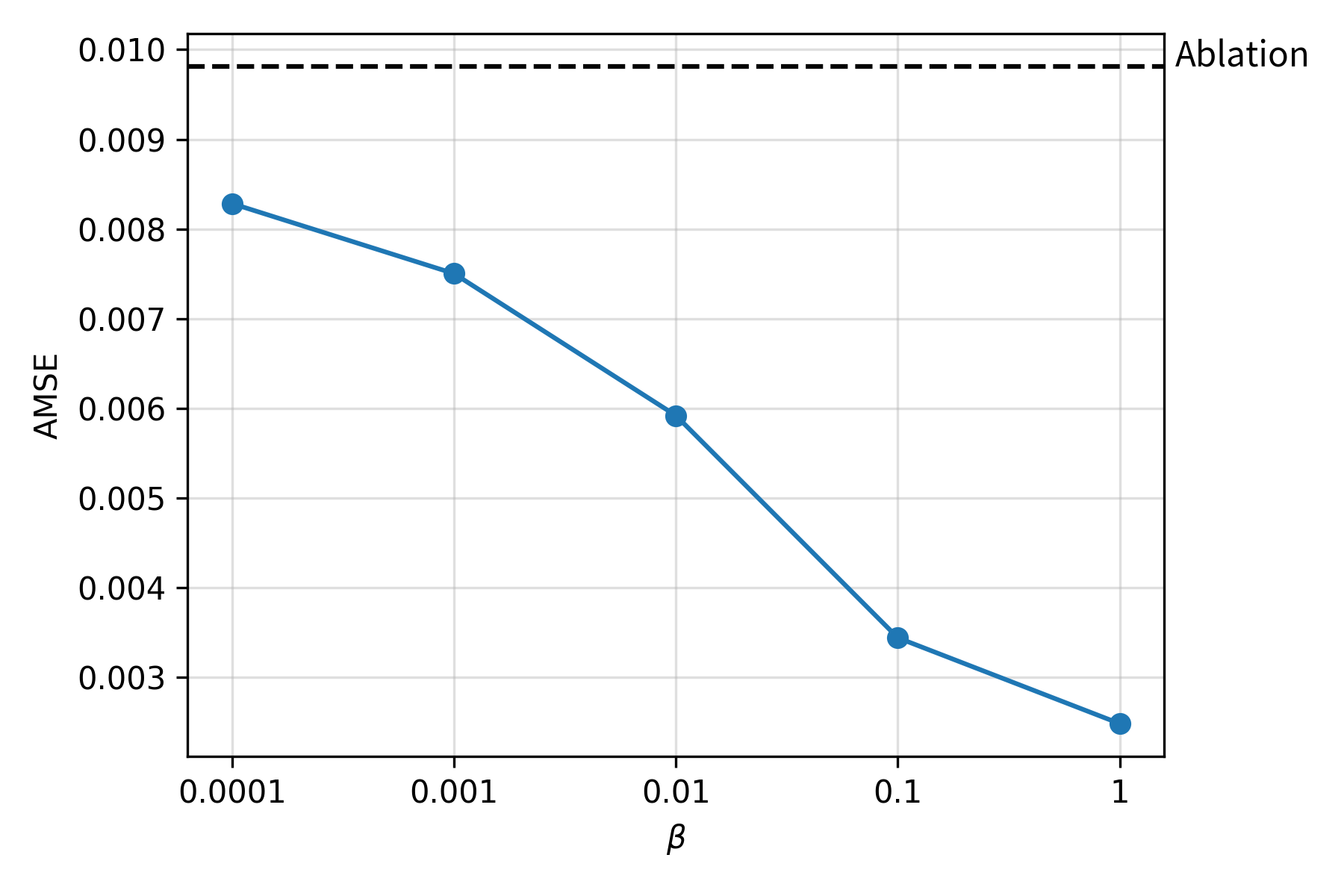}
        \caption{Synthetic Data}
    \end{subfigure}
    \hfill
    \begin{subfigure}{0.49\linewidth}
        \centering
        \includegraphics[width=\linewidth]{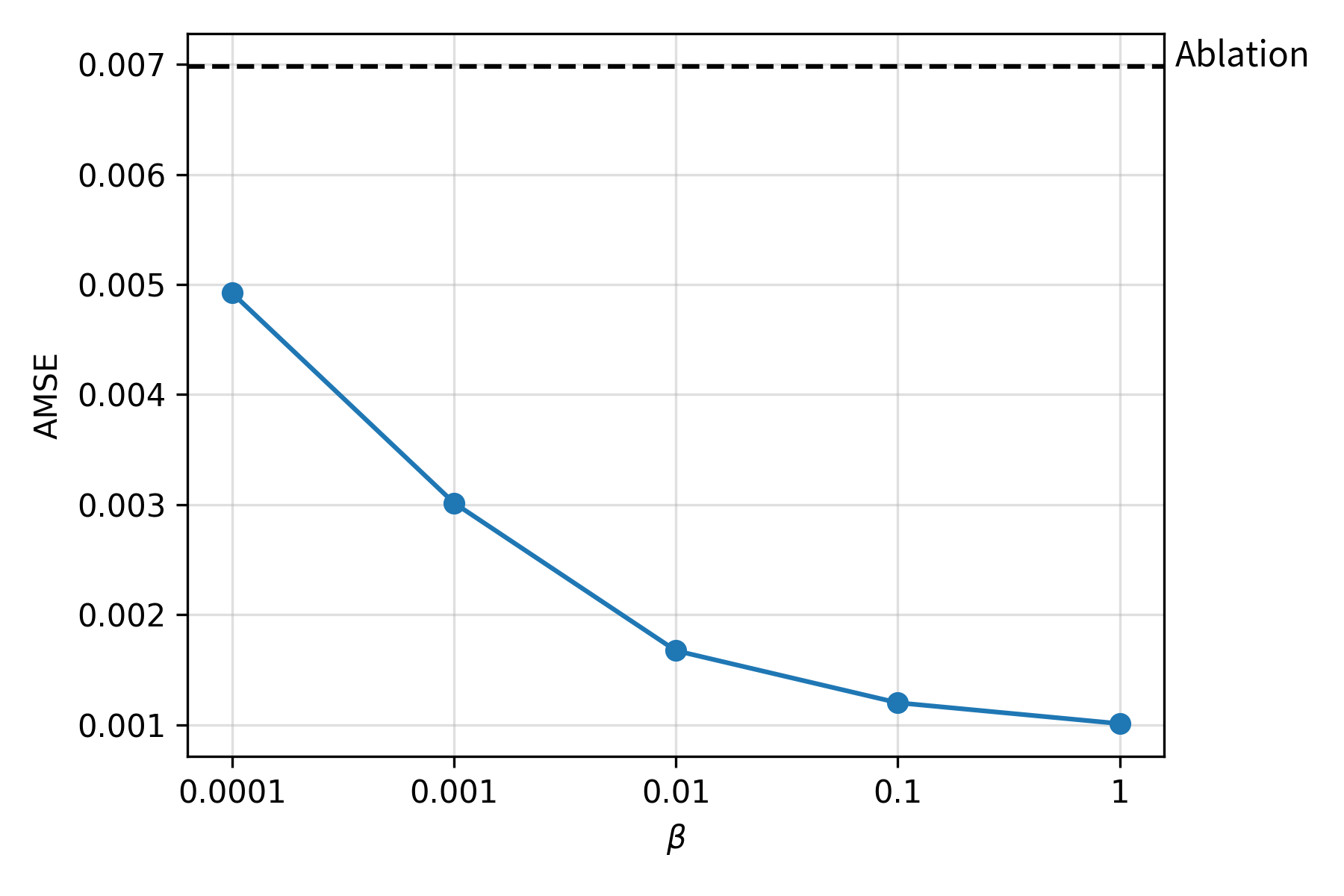}
        \caption{News}
    \end{subfigure}

    \caption{AMSE under different settings of $\beta$.}
    \label{fig:sens_beta}
\end{figure}

\subsubsection{Comparison with direct loss debiasing}
While our proposed method directly targets the final estimand, a well-established line of prior work that specifically addresses the sample selection bias issue in CVR prediction is based on the concept of the \emph{ideal loss} ~\cite{wang2019doubly,zhang2020large,guo2021enhanced,dai2022generalized,li2022stabledr,li2022tdr}. 
The core idea of this approach is to train models using only clicked samples, while constructing an unbiased estimator of the loss over the full population, typically via reweighting, and then optimize model parameters using this debiased loss.
Existing studies primarily apply this methodology to prediction problems, rather than causal effect estimation. 
This naturally raises a question: can the loss-debiasing idea be straightforwardly combined with standard causal inference methods? 
Is it necessary to construct a new estimator  tailored to the estimand of this work?

To answer this question, we propose an alternative approach that combines loss debiasing with standard causal effect estimation, and compare it with our previously proposed method. 
Specifically, when using standard causal estimators directly, selection bias mainly arises from the following two components:
\begin{itemize}
    \item The prediction loss for $\mu'_2=\mathbb{E}(Y_2|Y_1=1,X=x,A=a)$, which is similar to that in CVR prediction:
    \[
    \mathcal{L}^{biased}_2 
    = |\mathcal{C}|^{-1} \sum_{i \in \mathcal{C}} \Big(
- y_{2i} \log(\hat{\mu}'_{2i})
- (1 - y_{2i}) \log(1 - \hat{\mu}'_{2i})
\Big);
    \]
    \item The standard targeted regularization term\cite{nie2021vcnet}:
    \[
    \mathcal{R}^{biased}_2 
    = |\mathcal{C}|^{-1} \sum_{i \in \mathcal{C}} 
\left(
y_{2i} - \hat{\mu}'_{2i}
- \frac{\hat{\epsilon}'_2(a_i)}{\hat{\pi}_i}
\right)^2.
    \]
\end{itemize}
where $\mathcal{C}$ denotes the clicked set. Therefore, a straightforward solution is to get unbiased estimates of these two terms for backpropagation. 
Here, we adopt the widely used and most convenient inverse propensity scoring (IPS) method~\cite{schnabel2016recommendations}, and replace $\mathcal{L}_2$ and $\mathcal{R}_2$ in our proposed method with  $\mathcal{L}'_2$ and $\mathcal{R}'_2$, respectively:
\[
\mathcal{L}'_2 
= |\mathcal{C}|^{-1} \sum_{i \in \mathcal{C}} \frac{1}{\hat{\mu}_{1i}} \Big(
- y_{2i} \log(\hat{\mu}'_{2i})
- (1 - y_{2i}) \log(1 - \hat{\mu}'_{2i})
\Big);
\]
    \[
    \mathcal{R}'_2 
    = |\mathcal{C}|^{-1} \sum_{i \in \mathcal{C}} 
\frac{1}{\hat{\mu}_{1i}}\left(
y_{2i} - \hat{\mu}'_{2i}
- \frac{\hat{\epsilon}'_2(a_i)}{\hat{\pi}_i}
\right)^2,
    \]
where gradients are blocked from propagating through $\hat{\mu}_1$ in both $\mathcal{L}'_2$ and $\mathcal{R}'_2$. The final estimator becomes:
\[
\hat{\psi}'^{cvr}_a = \frac{1}{n} \sum_{i=1}^n
\left(
\hat{\mu}'_{2i}
+ \frac{\hat{\epsilon}'_2(a)}{\hat{\pi}_i}
\right).
\]

Table~\ref{tab:alt} compares the performance of this alternative approach with our previously proposed method on the CVR task (the method and performance on the CTR task remain unchanged). 
It can be observed that the alternative approach consistently outperforms the plug-in estimator without targeted regularization across different datasets, and also achieves better results than most baselines reported in Tables~\ref{tab:performance}. 
This is because it also leverages a targeted regularization to reduce estimation bias, and obtains an unbiased estimate of the loss terms. 
However, the alternative approach performs noticeably worse than our proposed method. 
The key reason is that it focuses on eliminating the estimation bias of the loss terms, rather than directly targeting the final estimand. 
There is no theoretical guarantee that unbiased estimation of the loss terms necessarily leads to accurate estimation of the final estimand. 
Even if the "unbiased loss" is well approximated over the entire parameter space, its minimizer may still be poorly estimated. 
Therefore, it is necessary to directly construct estimation procedures tailored to the target estimand with sound theoretical properties, rather than simply combining existing methods.

\begin{table}[htbp]
\centering
\caption{Comparison with the Alternative Method. 
Reported: AMSE (mean $\pm$ standard deviation) over 20 simulations.}
\label{tab:alt}
\begin{tabular}{lcc}
\toprule
Model & Synthetic Data & News \\
\midrule
Alternative Method
& $0.00494_{\pm 0.00161}$
& $0.00323_{\pm 0.00260}$ \\

Ours
& $\textbf{0.00248}_{\pm 0.00087}$
& $\textbf{0.00101}_{\pm 0.00052}$ \\

w/o. t-reg
& $0.00981_{\pm 0.00346}$
& $0.00698_{\pm 0.00249}$ \\
\bottomrule
\end{tabular}
\end{table}

\subsection{Performance on Real-World Dataset}
To further demonstrate the practical applicability of our method, we conduct experiments and ablation studies on a large-scale real-world dataset CRITEO-UPLIFTv2~\cite{diemert2021large}. 
It contains approximately 13 million samples with 12 features, a binary treatment variable, and two binary outcomes: visit and conversion. 
We randomly sample 10\% of the data for our experiments, and further split it into a training, validation and a test set with a ratio of 60\%, 20\% and 20\%.

Since the publicly available datasets contain only binary treatment, we adapt the varying coefficient network to separately predict nuisance parameters $\mu_1$ and $\mu_2$ for the two treatment groups, and removed the baselines specifically designed for continuous treatments.
Moreover, since the true value of $\psi$ cannot be obtained, commonly used evaluation metrics are AUUC and QINI, which require defining the estimand at the individual level. Our previous theoretical analysis are established at an averaged level, which is consistent with the settings adopted in widely used methods such as DragonNet~\cite{shi2019adapting} and VCNet~\cite{nie2021vcnet}. 
At the individual level, a natural approach is to remove the final averaging step in $\hat{\psi}^{tr}$.
In summary, due to the limitations of publicly available datasets, our method may be underestimated. The preceding experiments provide a more comprehensive and rigorous evaluation that better aligns with the settings of our work. This section serves only as a supplementary reference, primarily to further address potential concerns regarding the practical applicability of our method in real-world scenarios.


As shown in Table~\ref{tab:criteo}, our method still achieves significantly better performance in estimating the causal effect on CVR than the other models. The ablation study once again demonstrates that the targeted regularization term plays a critical role in improving the performance. The cumulative gain curves shown in Fig.~\ref{fig:gain} provide an intuitive illustration of the performance gains brought by the targeted regularization and the doubly robust design.
These results further validate the effectiveness and practical utility of our method in real-world scenarios.






\begin{table}[htbp]
\centering
\caption{Performance on the CVR task (CRITEO).}
\label{tab:criteo}

\begin{tabular}{lcc}
\toprule
& AUUC & QINI \\
\midrule
DragonNet
& 0.02216 & 0.04483 \\

TARNet
& 0.02031 & 0.04129 \\

T-Learner
& 0.01798 & 0.03467 \\

Causal Forest
& 0.00405 & 0.00798 \\

ECUP
& 0.02765 & 0.05253 \\
\midrule
Ours
& \textbf{0.03208} & \textbf{0.06014} \\

Ours (w/o t-reg)
& 0.01017 & 0.01950 \\
\bottomrule
\end{tabular}

\parbox{\linewidth}{\footnotesize
\noindent
Following previous work using this dataset~\cite{liu2023explicit}, we compute
AUUC and QINI using \texttt{uplift\_auc\_score} and \texttt{qini\_auc\_score}
from \texttt{sklearn}, both of which are normalized by their theoretical
maximum values.
}

\end{table}

\section{Conclusion}
In this work, we formulate the CVR causal effect estimation problem from a semiparametric perspective, and propose an estimation framework with both theoretical guarantees and practical value. Specifically, we construct a new doubly robust estimator tailored to the target estimand, and show that it achieves $\sqrt{n}-$ consistency under mild conditions. Building on these theoretical findings, we develop a practical framework based on targeted regularization. Extensive experiments demonstrate the strong performance of the proposed method and the effectiveness of targeted regularization, as well as its advantages over the indirect loss-debiasing approach.

\begin{figure}[H]
    \centering
    \includegraphics[width=0.4\textwidth]{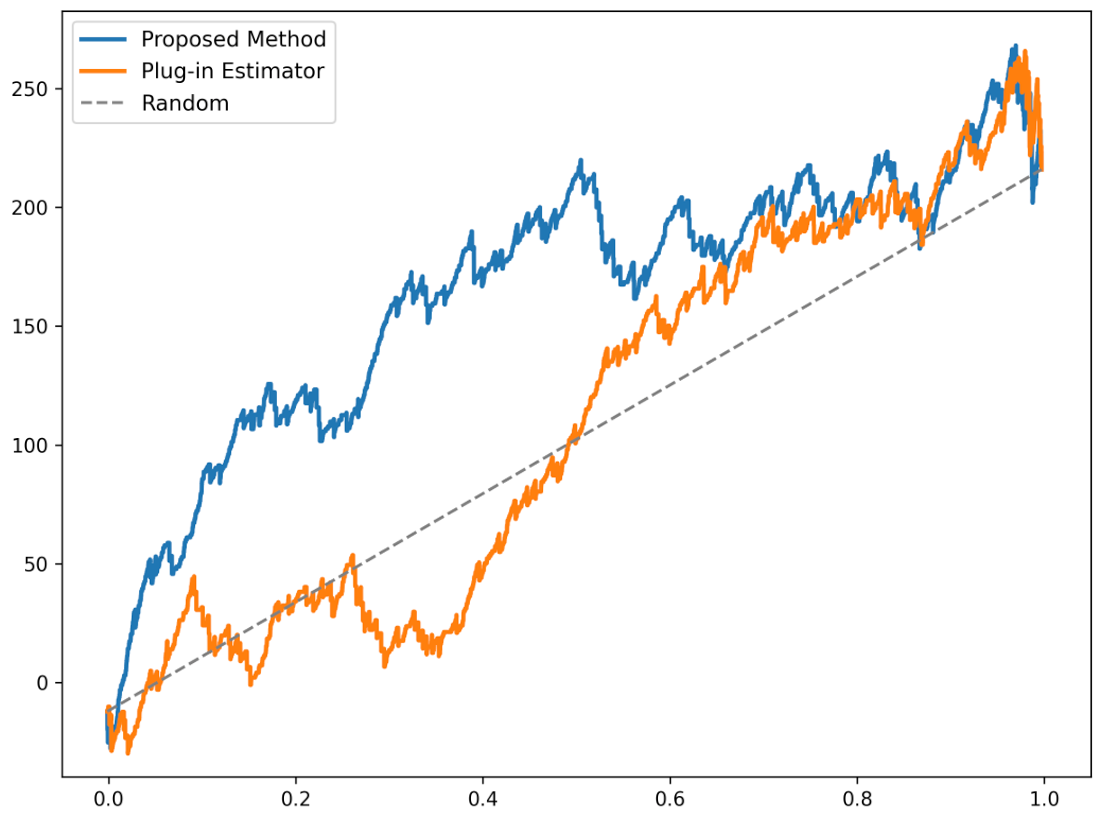} 
    \caption{Cumulative Gain Curve.}
    \label{fig:gain}
\end{figure}

\section{Impact Statement}
This paper proposes a general framework for estimating causal effects of chain-structured outcomes such as CVR. Our method can be applied to a wide range of applications, including decision-making in e-commerce and online advertising scenarios.


\bibliographystyle{ACM-Reference-Format}
\bibliography{references}

\appendix

\section{Appendix}

\subsection{Proof of Lemma \ref{lem:vonmises}}

\begin{proof}
\begin{align*}
&R_2(\hat{\mathbb{P}},\mathbb{P}) 
= \psi_a(\hat{\mathbb{P}})-\psi_a(\mathbb{P}) + \int \phi_a(\hat{\mathbb{P}})\,d\mathbb{P} \\
&= \int \Bigg\{ 
\frac{\delta(A=a)}{\hat{\pi}(A|\mathbf{X)})}
\Big[
\frac{Y_2-\hat{\mu}_2(\mathbf{X},A)}{\hat{\mu}_1(\mathbf{X},A)} \\
& \quad -\frac{\hat{\mu}_2(\mathbf{X},A) (Y_1-\hat{\mu}_1(\mathbf{X},A))}{\hat{\mu}_1^2(\mathbf{X},A)}\Big]
+ \frac{\hat{\mu}_2(\mathbf{X},A)}{\hat{\mu}_1(\mathbf{X},A)}
\Bigg\} d\mathbb{P}
- \psi_a(\mathbb{P}) \\
&= \int_{\mathcal{X}}\int_{\mathcal{Y}_1}
-\frac{\hat{\mu}_2 (y_1-\hat{\mu}_1)}{\hat{\pi}\hat{\mu}_1^2}
p(y_1,\mathbf{x},a)\,dy_1\,d\mathbf{x} \\
&\quad+ \int_{\mathcal{X}}\int_{\mathcal{Y}_2}
\frac{y_2-\hat{\mu}_2}{\hat{\pi}\hat{\mu}_1}
p(y_2,\mathbf{x},a)\,dy_2\,d\mathbf{x} \\
&\quad
+ \int \frac{\hat{\mu}_2}{\hat{\mu}_1}\,d\mathbb{P}(\mathbf{x})
- \psi_a(\mathbb{P}) \\
&= \int_{\mathcal{X}}\int_{\mathcal{Y}_1}
\frac{-\hat{\mu}_2 (y_1-\hat{\mu}_1)}{\hat{\pi}\hat{\mu}_1^2}
p(y_1\mid\mathbf{x},a)\,\pi(a\mid\mathbf{x})\,p(\mathbf{x})
\,dy_1\,d\mathbf{x} \\
& \quad
+ \int_{\mathcal{X}}\int_{\mathcal{Y}_2}
\frac{y_2-\hat{\mu}_2}{\hat{\pi}\hat{\mu}_1}
p(y_2\mid\mathbf{x},a)\,\pi(a\mid\mathbf{x})\,p(\mathbf{x})
\,dy_2\,d\mathbf{x} \\
&\quad
+ \int \frac{\hat{\mu}_2}{\hat{\mu}_1}\,d\mathbb{P}(\mathbf{x})
- \psi_a(\mathbb{P}) \\
&= \int_{\mathcal{X}}
\frac{\pi}{\hat{\pi}}
\int_{\mathcal{Y}_1}
\frac{-\hat{\mu}_2 (y_1-\hat{\mu}_1)}{\hat{\mu}_1^2}
p(y_1\mid\mathbf{x},a)\,dy_1
\,d\mathbb{P}(\mathbf{x}) \\
&\quad
+ \int_{\mathcal{X}}
\frac{\pi}{\hat{\pi}}
\int_{\mathcal{Y}_2}
\frac{y_2-\hat{\mu}_2}{\hat{\mu}_1}
p(y_2\mid\mathbf{x},a)\,dy_2
\,d\mathbb{P}(\mathbf{x}) \\
&\quad + \int \frac{\hat{\mu}_2}{\hat{\mu}_1}\,d\mathbb{P}(\mathbf{x})
- \psi_a(\mathbb{P}) \\
&= \int
\frac{\pi}{\hat{\pi}}
\frac{\mu_2-\hat{\mu}_2}{\hat{\mu}_1}
\,d\mathbb{P}(\mathbf{x})
+ \int
\frac{\pi}{\hat{\pi}}
\Bigg(
-\frac{\hat{\mu}_2}{\hat{\mu}_1^2}
\Bigg)
(\mu_1-\hat{\mu}_1)
\,d\mathbb{P}(\mathbf{x}) \\
&\quad- \int
\Bigg(
\frac{\mu_2}{\mu_1}
- \frac{\hat{\mu}_2}{\hat{\mu}_1}
\Bigg)
\,d\mathbb{P}(\mathbf{x}).
\end{align*}

Here $p(\cdot)$ denotes the corresponding probability density.
Define $f(x,y)=y/x$.
Under the assumption that both $\mu_1$ and $\hat{\mu}_1$ have positive lower bound, the function $f$ is twice continuously differentiable on this region.
By the second-order Taylor expansion of a bivariate function with a Lagrange remainder, we have
\[
\begin{aligned}
f(\mu_1,\mu_2)-f(\hat{\mu}_1,\hat{\mu}_2)
&=
\nabla f(\hat{\mu}_1,\hat{\mu}_2)^\top
\begin{pmatrix}
\mu_1-\hat{\mu}_1\\
\mu_2-\hat{\mu}_2
\end{pmatrix}
+ R_\mu \\
=&
-\frac{\hat{\mu}_2}{\hat{\mu}_1^2}(\mu_1-\hat{\mu}_1)
+\frac{1}{\hat{\mu}_1}(\mu_2-\hat{\mu}_2)
+ R_\mu,
\end{aligned}
\]
where the remainder term $R_\mu$ satisfies
\[
|R_\mu|
\le
C\Big(
(\hat{\mu}_1-\mu_1)^2
+
|\hat{\mu}_1-\mu_1|\,|\hat{\mu}_2-\mu_2|
\Big),
\]
for some bounded constant $C$. Therefore, 
\[
\begin{aligned}
R_2(\hat{\mathbb{P}},\mathbb{P})
=&
\int
\frac{1}{\hat{\pi}\hat{\mu}_1}
(\pi-\hat{\pi})(\mu_2-\hat{\mu}_2)
\,d\mathbb{P}(x) \\
&\quad -
\int
\frac{\hat{\mu}_2}{\hat{\pi}\hat{\mu}_1^2}
(\pi-\hat{\pi})(\mu_1-\hat{\mu}_1)
\,d\mathbb{P}(x)
+
\int R_\mu\,d\mathbb{P}(x).
\end{aligned}
\]
\end{proof}

\subsection{Proof of Theorem \ref{theo:asym}}
\begin{proof}
By the decomposition
\[
\hat\psi^{dr}_a-\psi_a(\mathbb{P})
=
(\mathbb{P}_n-\mathbb{P})\{\phi_a(\mathbb{P})\}
+
(\mathbb{P}_n-\mathbb{P})\{\phi_a(\hat{\mathbb{P}})-\phi_a(\mathbb{P})\}
+
R_2(\hat{\mathbb{P}},\mathbb{P}),
\]
the first term $(\mathbb{P}_n-\mathbb{P})\{\phi_a(\mathbb{P})\}$ is $\sqrt{n}$-consistent and asymptotically normal by the central limit theorem. 
It therefore suffices to show that under
\[
\|\hat{\pi}(a|\mathbf{x}) - \pi(a|\mathbf{x})\| = o_{\mathbb{P}}(n^{-1/4}),
\]
\[
\|\hat\mu_j(\mathbf{x},a)-\mu_j(\mathbf{x},a)\| = o_{\mathbb{P}}(n^{-1/4}),\ j=1,2,
\]
and the existence of a constant $\varepsilon>0$ such that $\hat\pi(x,a)\ge \varepsilon$ and $\hat\mu_1(x,a)\ge \varepsilon$ (with probability $1$),
we have $R_2(\hat{\mathbb{P}},\mathbb{P})=o_{\mathbb{P}}(n^{-1/2})$.

By Lemma~\ref{lem:vonmises} and an application of the Cauchy--Schwarz inequality, we have
\[
\begin{aligned}
|R_2(\hat{\mathbb{P}},\mathbb{P})|
\le\;&
C_1\|\hat\pi-\pi\|
\big(\|\hat\mu_1-\mu_1\|+\|\hat\mu_2-\mu_2\|\big) \\
&+
C_2\Big(
\|\hat\mu_1-\mu_1\|^2
+
\|\hat\mu_1-\mu_1\|\,\|\hat\mu_2-\mu_2\|
\Big).
\end{aligned}
\]
where $C_1,C_2<\infty$.
Hence, under the above $o_{\mathbb{P}}(n^{-1/4})$ convergence conditions,
\[
R_2(\hat{\mathbb{P}},\mathbb{P})=o_{\mathbb{P}}(n^{-1/2}).
\]
Together with Assumption~1, which states that $(\mathbb{P}_n-\mathbb{P})\{\phi_a(\hat{\mathbb{P}})-\phi_a(\mathbb{P})\}=o_{\mathbb{P}}(n^{-1/2})$, Slutsky’s theorem implies
\[
\hat\psi^{dr}_a-\psi_a(\mathbb{P})=o_{\mathbb{P}}(n^{-1/2}),
\]
and asymptotic normality follows.
\end{proof}

\subsection{Proof of Theorem \ref{thm:targ}}
\begin{proof}
Define
\[
S = \frac{(Y_2 - \hat{\mu}_2)}{\hat{\mu}_1}
- \frac{(Y_1 - \hat{\mu}_1)\hat{\mu}_2}{\hat{\mu}_1^2},
\quad 
\check{\epsilon}_n
=
\mathbb{P}\!\left[\frac{S}{\hat{\pi}_n} \,\middle|\, A = a \right]
\Big/
\mathbb{P}\!\left[\hat{\pi}_n^{-2} \,\middle|\, A = a\right].
\]

From the proof of Lemma~3 in \cite{nie2021vcnet}, which relies only on properties of spline functions and is independent of the target estimand, we have
\[
\|\hat{\epsilon}_n - \check{\epsilon}_n\|
= O_p\!\big(n^{-1/3}\sqrt{\log n}\big).
\]
Let
$
\hat{\psi}_a^{\mathrm{tr}}
=
\frac{1}{n} \sum_{i=1}^n
\left(
\frac{\hat{\mu}_2}{\hat{\mu}_1}
+ \frac{\hat{\epsilon}(a_i)}{\hat{\pi}}
\right).
$
Then,
\[
\begin{aligned}
\hat{\psi}_a^{\mathrm{tr}} - \psi_a
&=
\frac{1}{n} \sum_{i=1}^n
\left(
\frac{\hat{\mu}_2}{\hat{\mu}_1}
+ \frac{\hat{\varepsilon}(a)}{\hat{\pi}}
- \psi_a
\right) \\
&\le
\left\|
\hat{\varepsilon}(a)
\int_{\mathcal{X}} \frac{1}{\hat{\pi}} \, d\mathbb{P}_n(x)
-
\mathbb{P}\!\left(
\frac{\delta(A=a) S}{\hat{\pi}}
\right)
\right\| \\
&\quad
+ \left\|
\mathbb{P}\!\left[
\frac{\delta(A=a) S}{\hat{\pi}}
+ \frac{\hat{\mu}_2}{\hat{\mu}_1}\right]
- \psi_a
\right\| \\
&\quad
+ \left\|
\frac{1}{n} \sum_{i=1}^n \frac{\hat{\mu}_2}{\hat{\mu}_1}
-
\mathbb{P}\!\left( \frac{\hat{\mu}_2}{\hat{\mu}_1} \right)
\right\| \\
&=: T_1 + T_2 + T_3 .
\end{aligned}
\]

For the first term,
\[
\begin{aligned}
T_1
&=
\left\|
\hat{\varepsilon}(a)
\int_{\mathcal{X}} \frac{1}{\hat{\pi}} \, d\mathbb{P}_n(x)
-
\pi(a)\,
\mathbb{P}\!\left( \frac{S}{\hat{\pi}} \,\middle|\, A = a \right)
\right\| \\
&\le
\left\|
\bigl(\hat{\varepsilon}(a) - \tilde{\varepsilon}(a)\bigr)
\int_{\mathcal{X}} \frac{1}{\hat{\pi}} \, d\mathbb{P}_n(x)
\right\| \\
&\quad +
\left\|
\tilde{\varepsilon}(a)\,
\mathbb{P}\!\left[
\int_{\mathcal{X}} \frac{1}{\hat{\pi}} \, d\mathbb{P}_n(x)
-
\pi(a)\,
\mathbb{P}\!\left( \frac{1}{\hat{\pi}^2} \,\middle|\, A = a \right)
\right]
\right\| \\
&\lesssim
\|\hat{\varepsilon} - \tilde{\varepsilon}\| +
\left\|
\tilde{\varepsilon}(a)
\int_{\mathcal{X}} \frac{1}{\hat{\pi}} \, d(\mathbb{P}_n - \mathbb{P})(x)
\right\| \\
&\quad
+
\left\|
\mathbb{P}\!\left(
\frac{\mu_2 - \hat{\mu}_2}{\hat{\pi}\hat{\mu}_1}
\,\middle|\, A = a
\right)
\int_{\mathcal{X}}
\frac{\hat{\pi} - \pi}{\hat{\pi}} \cdot \frac{1}{\hat{\pi}}
\, d\mathbb{P}(x)
\right\| \\
&\quad
+
\left\|
\mathbb{P}\!\left(
\frac{(\mu_1 - \hat{\mu}_1)\hat{\mu}_2}{\hat{\pi}\hat{\mu}_1}
\,\middle|\, A = a
\right)
\int_{\mathcal{X}}
\frac{\hat{\pi} - \pi}{\hat{\pi}} \cdot \frac{1}{\hat{\pi}}
\, d\mathbb{P}(x)
\right\| \\
&=
O_{\mathbb{P}}\!\left(
n^{-1/3}\sqrt{\log n}
+ r_1(n) r_2(n)
+ r_1(n) r_3(n)
\right).
\end{aligned}
\]

For the second term,
\begin{align}
T_2
&=
\left\|
\mathbb{P}\!\left[
\frac{\delta(A=a) S}{\hat{\pi}}
+ \frac{\hat{\mu}_2}{\hat{\mu}_1}\right]
- \psi_a
\right\| \\
&=
\Big\|
\int \phi_a(\hat{\mathbb{P}})
+ \hat{\psi}_a
- \psi_a
\Big\|
=
\|R_2(\hat{\mathbb{P}}, \mathbb{P})\|.
\end{align}

From Lemma~\ref{lem:vonmises} and the proof of Theorem~\ref{theo:asym}, we obtain
\[
T_2
=
O_p\!\big(
r_1(n)r_2(n)
+ r_1(n)r_3(n)
+ r_2(n)r_3(n)
+ r_2(n)^2
\big).
\]
Under Assumption~(iii),
\[
T_3 = O_p(n^{-1/2}).
\]
Therefore,
\[
\|\hat{\psi}_a^{\mathrm{tr}} - \hat{\psi}_a^{\mathrm{dr}}\|
=
O_p\!\big(
n^{-1/3}\sqrt{\log n}
+ r_1(n)r_2(n)
+ r_1(n)r_3(n)
+ r_2(n)r_3(n)
+ r_2(n)^2
\big).
\]
\end{proof}

\subsection{Synthetic and Semi-synthetic Data Generation}
\textbf{Synthetic data generation.} We first generate covariates $\mathbf{X} \sim \text{Unif}(0,1) \in \mathbb{R}^8$, 
and the treatments and  outcomes are generated as follows (the basic forms of all components are kept consistent with related work \cite{nie2021vcnet}):

\[
\begin{aligned}
\tilde{a} \mid x = {} &
\frac{
10 \sin(\max(x_1, x_2, x_3)) + \max(x_3, x_4, x_5, x_8)^3
}{
1 + (x_1 + x_5)^2
} \\
&+ \sin(0.5x_3)\bigl(1 + \exp(x_4 + x_8 - 0.5x_3)\bigr)\\
& + x_3^2 + 2\sin(x_4) + \cos(x_8)
+ 2x_5 - 6.5
+ \mathcal{N}(0, 0.25), \\
a = & \sigma(\tilde{a}).
\end{aligned}
\]
where $\sigma(\cdot) = (1 + \exp(-(\cdot)))^{-1}$ denotes the sigmoid function.

\[
\begin{aligned}
Z_1 \mid x,a &= \cos\bigl((t-0.5)\cdot 2\pi\bigr)\Bigg(
    t^{2} + \frac{4\cdot\max(x_1,x_6)^{3}}{1 + 2 x_3^{2}}\sin(x_4)
\Bigg),\\[6pt]
Z_2 \mid x,a &= \sin\bigl((-0.5\,t + 0.1)\cdot 2\pi\bigr)(
    -t^{2} + t + \frac{4\cdot\max(x_1,x_7)^{3}}{0.5 + x_3^{3} - x_3}\sin(x_8)
), \\
Y_1 & =  \mathcal{B}(1, 0.7 \sigma(Z_1)), \quad
Y_2  =  Y_1 \times \mathcal{B}(1, 0.9 \sigma(Z_2)).
\end{aligned}
\]

Here, $\mathcal{B}(1,p)$ denotes a Bernoulli random variable with mean $p$.
$Y_1$ and $Y_2$ are binary random variables corresponding to whether a click occurs and whether a conversion occurs, respectively. By construction, $Y_2 = 0$ if $Y_1 = 0$.

\textbf{Semi-synthetic data generation.}We first generate a set of parameters $\boldsymbol{V}_{i}=\boldsymbol{U}_{i}/||\boldsymbol{U}_{i}||$ and $i=1,2,3,4$, where $\boldsymbol{U}_{i}$ is sampled from a normal distribution $\mathcal{N}(\mathbf{0}, \mathbf{1})$,
then:

\[
a \sim \mathrm{Beta}\!\left( 2, \ \left| \frac{v_4^\top \mathbf{x}}{2\, v_3^\top \mathbf{x}} \right| \right).
\]

\[
\begin{aligned}
Z_1 \mid \mathbf{x}, t
&= 2 \Big(4 (t-0.5)^2 \, \sin\big(0.5 \pi t\big)\Big) \\ &\times
\Bigg(
\max\Big(-2, \min\Big(2, \exp\big(0.3 (\pi \tfrac{v_3^\top \mathbf{x}}{v_4^\top \mathbf{x}} - 1)\big)\Big)\Big)
+ 20\, v_1^\top \mathbf{x}
\Bigg), \\[1em]
Z_2 \mid \mathbf{x}, t
&= 2 \Big(-2 (t+0.1)^2 \, \cos\big(0.5 \pi t^2\big)\Big) \\ &\times
\Bigg(
\max\Big(-1, \min\Big(1, \exp\big(0.5 (\pi \tfrac{v_3^\top \mathbf{x}}{v_4^\top \mathbf{x}} - 1)\big)\Big)\Big)
+ 20\, v_2^\top \mathbf{x}
\Bigg), \\
Y_1
&= \mathcal{B}\big(1, 0.6\, \sigma(Z_1)\big), \quad
Y_2
= Y_1 \times \mathcal{B}\big(1, 0.6\, \sigma(Z_2)\big).
\end{aligned}
\]

\end{document}